\newlength\aftertitskip     \newlength\beforetitskip
\newlength\interauthorskip  \newlength\aftermaketitskip
\def\maketitle{\par
 \begingroup
   \def\thefootnote{\fnsymbol{footnote}}
   \def\@makefnmark{\hbox to 4pt{$^{\@thefnmark}$\hss}}
   \@maketitle \@thanks
 \endgroup
\setcounter{footnote}{0}
 \let\maketitle\relax \let\@maketitle\relax
 \gdef\@thanks{}\gdef\@author{}\gdef\@title{}\let\thanks\relax}
\def\@startauthor{\noindent \normalsize\bf}
\def\@endauthor{}
\def\@starteditor{\noindent \small {\bf Editor:~}}
\def\@endeditor{\normalsize}
\def\@maketitle{\vbox{\hsize\textwidth
 \linewidth\hsize \vskip \beforetitskip
 {\begin{center} \LARGE\@title \par \end{center}} \vskip \aftertitskip
 {\def\and{\unskip\enspace{\rm and}\enspace}%
  \def\addr{\small\it}%
  \def\email{\hfill\small\tt}%
  \def\name{\normalsize\bf}%
  \def\AND{\@endauthor\rm\hss \vskip \interauthorskip \@startauthor}
  \@startauthor \@author \@endauthor}
}}
\newtheorem{theorem}{Theorem}[section]
\theoremstyle{definition}
\newtheorem{defn}[theorem]{Definition}
\newtheorem{rmk}[theorem]{Remark}
\numberwithin{equation}{section}
\newcommand\blfootnote[1]{%
	\begingroup
	\renewcommand\thefootnote{}\footnote{#1}%
	\addtocounter{footnote}{-1}%
	\endgroup
}
\title{Enhancing the Utility of Higher-Order Information \\in Relational Learning}
\author{\name Raphael Pellegrin$^\ast$ \email{raphaelpellegrin@alumni.harvard.edu}\\
  \addr{Independent Researcher}\\
\name Lukas Fesser$^\ast$ \email{lukas\_fesser@fas.harvard.edu}\\
  \addr{Harvard University}\\
\name Melanie Weber \email{mweber@seas.harvard.edu}\\
  \addr{Harvard University}
}
\begin{document}
\maketitle

\begin{abstract}
Higher-order information is crucial for relational learning in many domains where relationships extend beyond pairwise interactions. Hypergraphs provide a natural framework for modeling such relationships, which has motivated recent extensions of graph neural network architectures to hypergraphs. However, comparisons between hypergraph architectures and standard graph-level models remain limited. In this work, we systematically evaluate a selection of hypergraph-level and graph-level architectures, to determine their effectiveness in leveraging higher-order information in relational learning. Our results show that graph-level architectures applied to hypergraph expansions often outperform hypergraph-level ones, even on inputs that are naturally parametrized as hypergraphs. As an alternative approach for leveraging higher-order information, we propose hypergraph-level encodings based on classical hypergraph characteristics. While these encodings do not significantly improve hypergraph architectures, they yield substantial performance gains when combined with graph-level models. Our theoretical analysis shows that hypergraph-level encodings provably increase the representational power of message-passing graph neural networks beyond that of their graph-level counterparts.


\end{abstract}

\blfootnote{$\ast$: Equal contribution.}
\blfootnote{All code is publicly available at: \url{https://github.com/Weber-GeoML/Hypergraph_Encodings}}

\section{Introduction}
\label{intro}
The utility of higher-order information has long been recognized in network science and graph machine learning: ``Multi-way networks'' arise in many domains in the social and natural sciences, where downstream tasks depend on relationships between groups of entities rather than the pairwise relationships captured in standard networks~\citep{bick2023higher,benson2021higher,schaub2021signal}.
While graphs are limited to representing pairwise relationships, \emph{hypergraphs} effectively represent multi-way relationships by allowing \emph{hyperedges} between any number of vertices.

This enhanced flexibility has motivated a growing body of literature on extending classical graph neural network architectures to hypergraphs, including message-passing~\citep{huang2021unignn} and transformer-based models~\citep{liu2024hypergraph}. Typical validation studies compare hypergraph architectures against each other, but not against standard graph neural networks (GNNs). Here, we perform a comparison of a selection of both types of architectures. We observe that graph-level architectures strictly outperform current hypergraph-level ones, even if the input data is naturally parametrized as a hypergraph. In that case, the GNN is applied to the hypergraph’s clique expansion, a natural reparametrization. This seemingly contradicts the intuition that leveraging higher-order information is useful.

Our observations raise the question, \emph{how can higher-order relational information be effectively utilized for learning?} In addition to encoding structural information as inductive biases directly into architectures, recent studies have demonstrated the effectiveness of using \emph{encodings} as an alternative approach. Here, the input graph is augmented with structural information, typically consisting of graph-level characteristics, such as spectral information~\citep{dwivedi2023benchmarking}, substructure counts~\citep{zhao2021stars}, or discrete curvature~\citep{fesser2023effective}. In this work, we investigate whether encodings computed at the hypergraph level enable better utilization of higher-order information in the sense of enhanced performance improvements.

We begin by proposing several hypergraph-level encodings using classical hypergraph characteristics and prove that they capture structural information that cannot be represented by traditional hypergraph message-passing schemes or graph-level encodings. We then conduct a systematic comparison of hypergraph-level and graph-level encodings when combined with graph- and hypergraph-level message-passing as well as transformer-based architectures. Our findings indicate that hypergraph-level encodings do not substantially enhance the performance of hypergraph-level architectures. However, significant performance gains are observed when hypergraph-level encodings are applied within graph-level message-passing and transformer-based architectures. 
We complement these experiments with an empirical analysis of the representational power of hypergraph-level encodings. 
Overall, we find that hypergraph-level encodings provide an effective means of leveraging higher-order information in relational data. 

\subsection{Related Work}
Topological Deep Learning has emerged as the dominant framework for learning on topological domains, including hypergraphs, as well as simplicial, polyhedral and more general cell complexes~\citep{hajij2022higher,hajij2024topox,papillon2023architectures}. Many classical graph-learning architectures have been extended to these domains. In the case of hypergraphs, this includes message-passing~\citep{huang2021unignn} and transformer-based~\citep{liu2024hypergraph} hypergraph neural networks. 

To the best of our knowledge, encodings have so far only been studied in the context of graph-level learning~\citep{dwivedi2023benchmarking}. Popular encodings leverage structural and positional information captured by classical graph characteristics~\citep{rampavsek2022recipe,kreuzer2021rethinking,cai2018simple,zhao2021stars,fesser2023effective,bouritsas2022improving}.


\subsection{Summary of Contributions}
The main contributions of this paper are as follows:
\vspace*{-4pt}
\begin{enumerate}
\setlength{\itemsep}{2pt}
\vspace*{-4pt}
    \item We provide experimental evidence that graph-level architectures applied to hypergraph expansions have comparable or superior performance to hypergraph-level ones, even on inputs that are naturally parametrized as hypergraphs.
    \item We introduce hypergraph-level encodings that allow for augmenting a (hyper-)graph-structured input with higher-order positional and structural information captured in hypergraph characteristics. We show that hypergraph-level encodings are provably more expressive than their graph-level counterparts.
    \item We show that hypergraph-level encodings can significantly enhance the performance of graph neural networks applied to hypergraph expansions.
\end{enumerate}

\section{Background}
\label{background}
We consider graphs $G=(X,E)$ with node attributes $X \in \mathbb{R}^{\vert V \vert \times m}$ and edges $E \subseteq V \times V$, representing pairwise relations between nodes in $V$. We further consider hypergraphs $H=(X,F)$ where hyperedges $F$ denote relations between groups of nodes. Hypergraphs can be reparametrized as graphs using clique expansions; for more details see Apx.~\ref{appendix-gtohg}.

\subsection{Architectures}

\begin{table*}[h!]
\footnotesize
  \centering
  \begin{tabular}{|c|c|c|c|}
    \hline
    \textbf{Architecture}  & \textbf{Type} & \textbf{Level} & \textbf{Update Function} \\ \hline
    GCN \citep{kipf2016semi} & MP & graph & \makecell{$X^{l+1} = \sigma \left(\tilde{D}^{-1/2}\tilde{A}\tilde{D}^{-1/2} X^{l} W^{l} \right)$ \\ $\tilde{A}=A+I_N$ \\ $\tilde{D}_{ii}=\sum_j \tilde{A}_{ij}$}  \\ \hline
    GIN \citep{xu2018powerful} & MP & graph & \makecell{$X^{l+1} = \text{MLP}^{l} \left( \left( 1 + \epsilon \right) X^{l} + AX^{l} \right)$}  \\ \hline
    GPS \citep{rampavsek2022recipe} & hybrid (MP, T) & graph  & \makecell{$X^{l+1},E^{l+1}=\text{GPS}^l(X^l, E^l, A)$ \\
$X^{l+1}_M,E^{l+1}=\text{MPNN}^l_e(X^l, E^l, A)$ \\
$X^{l+1}_T=\text{GlobalAttn}^l(X^l)$ \\
$X^{l+1}=\text{MLP}(X_M^{l+1}+X_T^{l+1})$} \\ \hline 
     UniGCN \citep{huang2021unignn} & MP & hypergraph &  \makecell{$\tilde{x}_i^{l+1} = \frac{1}{\sqrt{d_i+1}} \sum_{e \in \tilde{E}_i} \frac{1}{\sqrt{\overline{d}_e}} W^l h_e^{l+1}$} \\ \hline
     UniGIN \citep{huang2021unignn} & MP & hypergraph & $\tilde{x}_i^{l+1} = W^l\left( (1 + \varepsilon)x_i^l + \sum_{e \in E_i} h_e^{l+1} \right)$\\ \hline
     UniGAT \citep{huang2021unignn} & MP & hypergraph &   \makecell{
$\alpha_{ie}^{l+1} = \sigma \left( a^T \left[ W^l h_{\{i\}}^{l+1} ; W^l h_e^{l+1} \right] \right)$, \\
$\tilde{\alpha}_{ie}^{l+1} = \frac{\exp (\alpha_{ie}^{l+1})}{\sum_{e' \in \tilde{E}_i} \exp (\alpha_{ie'}^{l+1})}$, \\
$\tilde{x}_i^{l+1} = \sum_{e \in \tilde{E}_i} \tilde{\alpha}_{ie}^{l+1} W^l h_e^{l+1}$} \\ \hline 
     UniSAGE \citep{huang2021unignn} & MP & hypergraph &  $\tilde{x}_i^{l+1} = W^l(x_i^l + \text{AGGREGATE} (\{h_e^{l+1}\}_{e\in E_i}
))$\\ \hline
     UniGCNII \citep{huang2021unignn} & MP & hypergraph & \makecell{$\hat{x}_i^{l+1} = \sqrt{\frac{1}{d_i+1}} \sum_{e \in \tilde{E_i}} \sqrt{\frac{1}{\overline{d}_e}} h_e^{l+1}$ \\
     $\tilde{x}_i^{l+1} = \left((1 - \beta)I + \beta W^l\right)\left((1 - \alpha)\hat{x}_i^{l+1} + \alpha x^0_i\right)$ \\ $\text{where } \alpha \text{ and } \beta \text{ are hyperparameters}$}
\\ \hline
  \end{tabular}
  \caption{Overview of Architectures. $W^l$ represents a trainable weight matrix for layer $l$. $
  \epsilon$ represents a learnable parameter. We use matrix notation for graph architectures, and vector notation for hypergraphs.}
  \label{tab:architectures-overview}
\end{table*}

\paragraph{Message-passing GNN}
Message-Passing (MP)~\citep{gori2005new,Hamilton:2017tp} is a prominent learning paradigm in relational learning, where a node’s representation is iteratively updated based on the representations of its neighbors. Formally, let $x_v^l$ denote the representation of node $v$ at layer $l$. Message-passing implements the following update,
$$x_v^{l+1} = \phi_l \Big( \bigoplus_{p \in \mathcal{N}_v \cup \{v\}} \psi_l \left ( x_p^l\right)\Big)$$,
where $\psi_l$ denotes an aggregation function (e.g., averaging) acting on the 1-hop neighborhood $\mathcal{N}_v$ of $v$, and $\phi_l$ an update function with trainable parameters, such as an MLP. The number of MP iterations is commonly referred to as the \emph{depth} of the network. Representations are initialized by the node attributes in the input.

\paragraph{Transformer-based GNN}
The second major class of architectures for relational learning is transformer-based (T). Networks consist of blocks of multi-head attention layers ($GlobalAttn(\cdot)$), followed by fully-connected feedforward networks. In the recent literature, hybrid architectures, which combine MP and attention layers, have been shown to exhibit strong performance on several state of the art benchmarks~\citep{rampavsek2022recipe}.

\paragraph{Graph-level architectures}
Our selection of graph-level architectures includes two MPGNNs and one hybrid architecture. GCN~\citep{kipf2016semi} is one of the simplest and most popular MPGNNs, making it an important reference point. GIN~\citep{xu2018powerful} is designed to be a  maximally expressive MPGNN. GraphGPS~\citep{rampavsek2022recipe} is a widely used hybrid architecture that performs well across the benchmarks considered here. As baselines, we evaluate simple instances of all three architectures without additional model interventions. An overview of the architectures can be found in Tab.~\ref{tab:architectures-overview}; more detailed descriptions are deferred to Apx.~\ref{appendix-gnn-architectures}.

\paragraph{Hypergraph-level architectures}
The architectures analyzed in this study implement message-passing, which on hypergraphs is implemented via a two-phase scheme: messages are passed from nodes to hyperedges and then back to nodes~\citep{huang2021unignn}. Formally,
\begin{align}\label{two-phase-scheme}
h_e^{l+1} =& \phi_1 \left( \left\{ x_j^l \right \}_{j \in e}\right)\\
\tilde{x}_i^{l+1} = &\phi_2 \left(x_i^l, \left\{h_e^{l+1}\right\}_{e \in E_i} \right) \; . \nonumber
\end{align}
Here, $x_j$ denotes the node features of node $j$, $h_e$ denotes the edge feature of edge $e$, $E_j$ is the set of all hyperedges containing $j$, and $\phi_1$ and $\phi_2$ are permutation-invariant functions for aggregating messages from vertices and hyperedges respectively.  $\tilde{x}_i$ indicates the output of the message passing layer before activation or normalization. Tab.~\ref{tab:architectures-overview} provides an overview of the hypergraph-level architectures considered here; more detailed description can be found in Apx.~\ref{hnn-architectures}.


\subsection{Encodings}

Structural (SE) and Positional (PE) encodings enhance MPGNNs by providing access to structural information that is crucial for downstream tasks, but that these networks cannot inherently learn~\citep{dwivedi2023benchmarking, rampavsek2022recipe}. Encodings can capture either local or global properties of the input graph. Local PEs supply nodes with information about their position within local clusters or substructures, such as their distance to the centroid of their community. In contrast, global PEs convey a node's overall position within the entire graph, often based on spectral properties like the eigenvectors of the Graph Laplacian~\citep{kreuzer2021rethinking} or random-walk based node similarities~\citep{dwivedi2021graph}. Graph-level SEs capture structural information, such as pair-wise node distances, node degrees, or statistics regarding the distribution of neighbors’ degrees~\citep{cai2018simple}, or discrete curvature~\citep{fesser2023effective}. Empirical evidence demonstrates that incorporating these PEs and SEs significantly improves the performance of GNNs~\citep{rampavsek2022recipe}.

\subsection{Representational Power}\label{appendix-1wl}
A key theoretical question in evaluating the effectiveness of different relational learning architectures is their \emph{representational power} or \emph{expressivity}: Which functions can and cannot be learned by the model? This question can be analyzed through the lens of a model’s ability to distinguish graphs that are not topologically identical (isomorphic). The 1-Weisfeiler-Leman (1-WL) test~\citep{weisfeiler1968reduction} provides a heuristic for this question. Notably,~\citet{xu2018powerful} showed that MPGNNs (specifically, GIN) are as expressive as the 1-WL test.
While 1-WL (and, by extension, MPGNNs) is effective for many classes of graphs, it has notable limitations, such as in distinguishing regular graphs. Generalizations of this procedure, known as the $k$-WL test, establish a hierarchy of progressively more powerful tests. At the same time, several graph characteristics are known to be more expressive than the 1-WL test. Consequently, combining MPGNNs with encodings based on these characteristics can enhance their expressivity~\citep{southern2023expressive, fesser2023effective, bouritsas2022improving}. See  Apx.~\ref{appendix-detailed-comparaison} for a detailed expressivity analysis.

\section{Hypergraph-level Encodings}
\label{theory}

\subsection{Laplacian Eigenvectors}
The Graph Laplacian $\Delta=D-A$ is a classical graph characteristic that is often leveraged for the design of encodings. \emph{Laplacian Eigenvector PE (LAPE)} 
are defined as 
\begin{equation}
\label{LAPE}
    p_i^{\text{LapPE}} = \begin{pmatrix} U_{i1}, U_{i2}, \hdots, U_{ik}
\end{pmatrix}^T 
\in \mathbb{R}^k \; ,
\end{equation}
where $\Delta=U^T \Lambda U$ is a spectral decomposition; $k$ is a hyperparameter.
Note that the eigenvectors are only defined up to $\pm 1$; we follow the convention in~\citep{dwivedi2021graph} and apply random sign flips.

In order to define a hypergraph-level extension of LAPE, we have to consider first the choice of Laplacian. 
We focus on the Hodge Laplacian here, but discuss other choices, specifically the normalized hypergraph Laplacian and random-walk Laplacian, in Apx.~\ref{Laplacians}. Our choice of the Hodge Laplacian is motivated by its desirable properties, including that it is symmetric.\\

\begin{defn}\label{def:hodge-lape} \textbf{(Hodge Laplacian).} 
Let $B_1$ denote an incidence matrix whose entries indicate relations between nodes and hyperedges. If a node $i$ is on the boundary of a hyperedge $j$, the relation is expressed as $i \prec j$.
\begin{equation} \label{incidence_matrix} (B_1)_{i,j} = \begin{cases}  1 \text{ if } i \prec j \\ 0 \text{ otherwise } \end{cases} \in \mathbb{R}^{V\times E} \; .
\end{equation}
\noindent The $0$- and $1$-Hodge Laplacian are given by 
$H_0 = B_1^TB_1$ and $H_1 = B_1B_1^T$.
\end{defn}

We define the \textit{Hodge-Laplacian Positional Encoding (H-$k$-LAPE)} in analogy to Eq.~\ref{LAPE} using the top $k$ eigenvectors of the Hodge 
Laplacian. 
We show below that the additional higher-order information captured by H-$k$-LAPE, but not by $k$-LAPE or standard message-passing, provably enhances the representational power of the architecture. 
The proofs in this and subsequent sections refer to graphs in the BREC dataset~\citep{wang2023towards}, more information on which is provided in the Apx.~\ref{apx:data-brec} and Apx.~\ref{appendix-pair-0}.\\

\begin{theorem}\label{thm:lape_exp} \textbf{(H-$k$-LAPE Expressivity).} For any $k$ MPGNNs with H-$k$-LAPE are strictly more expressive than the 1-WL test and hence MPGNNs without encodings. Furthermore, there exist graphs which can be distinguished using H-$k$-LAPE, but not using LAPE. 
\end{theorem}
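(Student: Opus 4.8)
The plan is to read both claims as existence-of-a-distinguishing-example statements, anchored by concrete pairs from the BREC dataset \citep{wang2023towards}, and to separately dispatch the easy ``at least as expressive'' direction that strictness in the first claim requires. For that direction I would observe that an MPGNN augmented with H-$k$-LAPE can always reproduce the computation of the corresponding plain MPGNN by simply ignoring the appended encoding coordinates; combined with the equivalence between GIN and the 1-WL test \citep{xu2018powerful}, this shows the augmented class is at least as expressive as 1-WL. It then remains to exhibit, for each $k$, a single 1-WL-indistinguishable pair that H-$k$-LAPE separates, and strictness over plain MPGNNs follows because MPGNNs are themselves bounded by 1-WL.

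For strictness I would pick a pair $(G_1,G_2)$ from BREC that 1-WL provably fails to distinguish (so plain MPGNNs produce identical node-feature multisets and hence identical graph embeddings), and compute the node-indexed Hodge Laplacian $H_1 = B_1 B_1^T$ for each. The mechanism is that the top-$k$ eigenvectors of $H_1$ assign to the vertices a multiset of encodings that differs between $G_1$ and $G_2$; appending these to the node attributes yields distinct augmented-feature multisets, which a maximally expressive model---GIN followed by a sum readout---maps to distinct graph representations. The concrete pair, the verification that 1-WL cannot separate it, and the check that the H-$k$-LAPE multisets differ are the ingredients documented in Apx.~\ref{appendix-pair-0}.

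For the second claim I would instead choose a pair that is \emph{LAPE-equivalent} but Hodge-distinguishable: the two graphs should be cospectral for the ordinary Laplacian $\Delta = D - A$ with coinciding LAPE node-encoding multisets, yet have differing top-$k$ eigenvector structure for the Hodge operator. This is possible precisely because $\Delta$ and $B_1 B_1^T$ are genuinely different operators: the Hodge operator is built from the incidence structure $B_1$ and reflects the (possibly higher-order) overlap pattern of hyperedges, and even on ordinary graphs it realizes the signless combination $D + A$ rather than $D - A$, so the two spectra and eigenspaces can disagree even when the $\Delta$-encodings do not. Exhibiting such a pair from BREC and checking that LAPE collapses it while H-$k$-LAPE separates it completes the argument, with the explicit instance and its verification again deferred to Apx.~\ref{appendix-pair-0}.

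The main obstacle, and the step needing the most care, is making the ``the encodings differ'' claim robust to the ambiguities inherent in eigenvector-based encodings: eigenvectors are defined only up to sign---hence the random sign flips of \citet{dwivedi2021graph}---and, in the presence of repeated eigenvalues, up to an orthogonal change of basis within each eigenspace. A difference that survives only for one sign or basis choice would not establish distinguishability. I would therefore phrase the comparison in terms of quantities invariant under these symmetries, for instance the sorted multiset of absolute entries $|U_{ij}|$ or, equivalently, the spectral projectors onto the top-$k$ eigenspaces, and verify that these invariants genuinely differ for the chosen pair. Establishing that a single pair is simultaneously 1-WL-indistinguishable (first claim) or LAPE-indistinguishable (second claim) while remaining Hodge-distinguishable is the computational heart of the argument and is carried out explicitly on the BREC instances.
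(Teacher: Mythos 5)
Your overall skeleton matches the paper's: the easy direction is dispatched by noting that the augmented MPGNN can simply ignore the encoding coordinates, and both existence claims are anchored on concrete 1-WL-indistinguishable BREC pairs whose encodings are compared computationally (the paper in fact uses a \emph{single} pair, Pair 0 of the ``Basic'' category, for both claims at once, checking that it is 1-LAPE-indistinguishable yet H-1-LAPE-distinguishable; see Apx.~\ref{appendix-pair-0}). Your insistence on comparing only sign- and basis-invariant quantities is a genuine improvement in rigor over the paper, which handles the ambiguity implicitly by comparing maximum absolute entries of the encodings (a quantity invariant under sign flips and row permutations), and your projector-based invariant also deals more cleanly with the ``for any $k$'' quantifier than the paper's $k=1$ computation does.

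There is, however, one concrete gap: you never state that, for a graph input, H-$k$-LAPE is computed on the graph's \emph{lifting} to a hypergraph (cliques promoted to hyperedges, Apx.~\ref{appendix-gtohg}), and your description of the mechanism points in a different direction. If $B_1$ is taken to be the incidence matrix of the original graph, then $B_1 B_1^T = D + A$ is exactly the signless Laplacian you invoke, and your construction reduces to a graph-level encoding that is \emph{not} the paper's H-$k$-LAPE and need not separate the chosen pair: by the paper's own comparison (Tab.~\ref{tab:long-table-encodings}), graph-level Hodge 1-LAPE distinguishes only $48.21\%$ of the Basic pairs, whereas the hypergraph-level version distinguishes $98.21\%$. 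The extra expressive power the theorem asserts comes from the incidence structure of the lifted hypergraph --- hyperedges of size greater than $2$ recording clique membership --- not from the $D+A$ versus $D-A$ distinction on the same edge set. To close the gap, make the lifting step explicit and carry out the eigenvector comparison (with your invariants) on the lifted hypergraphs, which is what the paper's appendix verification does for Pair 0 (there with the normalized hypergraph Laplacian variant).
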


\begin{proof}
    Pair $0$ of the "Basic" category in BREC - a subset of BREC (Apx.~\ref{apx:data-brec} and Apx.~\ref{appendix-detailed-encodings}) is a pair of non-isomorphic, 1-WL indistinguishable graphs \ref{fig:pair-0-lifting}. The pair is 1-LAPE-indistinguishable, but can be distinguished with H-1-LAPE (see Apx.~\ref{appendix-pair-0}).
\end{proof}

\begin{figure*}[h!]
  \centering  \includegraphics[width=\textwidth]{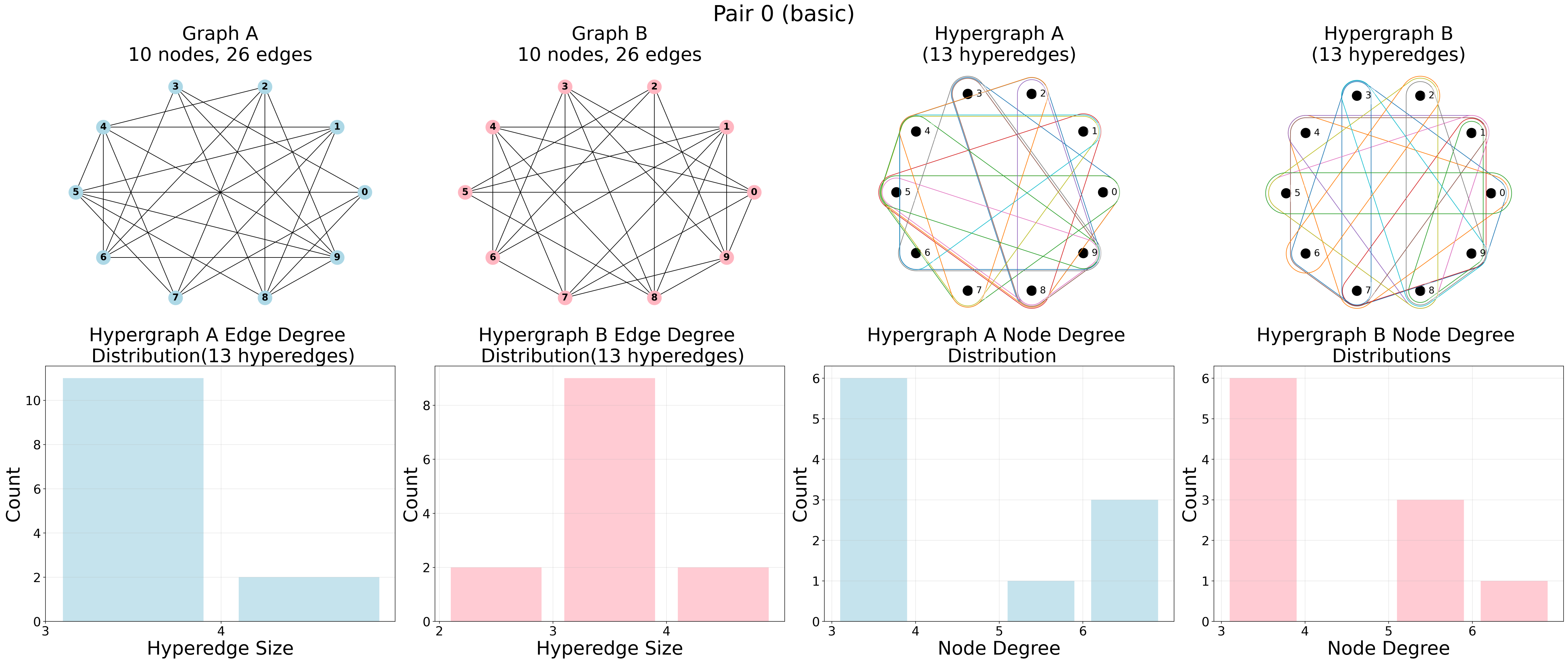}
  \caption{A pair of graph from the BREC "Basic" category  (top left), the graphs' liftings (top right), the hyperedge sizes (bottom left) and node degrees (bottom right).}
  \label{fig:pair-0-main}
\end{figure*}

\begin{rmk}\label{thm:lape_comp} \textbf{(H-LAPE Complexity).} Computing a full spectral decomposition of $\Delta$ has complexity $O(|V|^3)$, where $|V|$ is the number of nodes in the input graph. However, by exploiting sparsity and the fact that we only require the top eigenvectors, Lanczos' algorithm can be used to compute H-$k$-LAPE in $O(\vert E \vert k)$.
\end{rmk}

\subsection{Random Walk Transition Probabilities}
Another widely used positional encoding, Random Walk PE (RWPE), is defined using the probability of a random walk revisiting node $i$ after $1, 2, \hdots, k$ steps, formally
\begin{equation}
\label{RWPE}
p_i^{k\text{-RWPE}} = \begin{pmatrix} RW_{ii}, RW_{ii}^2, \hdots, RW_{ii}^k
\end{pmatrix}^T 
\in \mathbb{R}^k \; ,
\end{equation}
where $k$ is a hyperparameter. Since the return probabilities depend on the graph's topology, they capture crucial structural information. Notably, $k$-RWPE
does not suffer from sign ambiguity like LAPE, instead providing a unique node representation whenever nodes have topologically distinct $k$-hop neighborhoods. 

We define an analogous notion of PEs at the hypergraph level. We consider the following notion:\\
\begin{defn}\label{def:hrw} \textbf{(Random Walks on Hypergraphs~\citep{coupette2022ollivier}).} 
We define Equal-Nodes Random Walks (EN) and
Equal-Edges Random Walks (EE), which induce the following two measures: 
\begin{align}\label{measures}
\mu_i^{\text{EN}}(j) & = \begin{cases} \frac{1}{|\mathcal{N}_i|} \text{ , if } j 
\in \mathcal{N}_i \\
0 \text{ otherwise} 
\end{cases}\\
\mu_i^{\text{EE}}(j) & = \begin{cases} \mathbb{P}^{\text{EE}}(i \rightarrow j)   \text{ ,} \text{ if } j 
\in \mathcal{N}_i \\
0 \text{ otherwise} 
\end{cases}
\end{align}
where $\mathcal{N}_i$ are the neighbors of $i$ and transition probabilities are given by
\begin{align*}
\mathbb{P}^{\text{EE}}(i \rightarrow j) & = \frac{1}{|\{e|i\in e, |e|\geq 2\}|} \sum_{\{e | \{i,j\} \subseteq e\}} \frac{1}{|e|-1}\; .
\end{align*}
\end{defn}

For an EN random walk, considering a move from node $i$, we pick one of the neighbors of node $i$ at random. For the EE scheme, we first pick a hyperedge that $i$ belongs to at random and then pick one of the nodes in the hyperedge at random.

We can now define \textit{Hypergraph Random Walk Positional Encodings} (H-$k$-RWPE) in analogy to $k$-RWPE. Again, we can show that H-$k$-RWPE provably enhances the representational power of an MPGNN, beyond those of $k$-RWPE.\\

\begin{theorem}\label{thm:rwpe_exp} \textbf{(H-$k$-RWPE Expressivity).} For $k \geq 2$, MPGNNs with H-$k$-RWPE are strictly more expressive than the 1-WL test and hence than MPGNNs without encodings. There exist graphs which can be distinguished using H-$k$-RWPE, but not using graph-level $k$-RWPE.
\end{theorem}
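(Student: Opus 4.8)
The plan is to mirror the proof of Theorem~\ref{thm:lape_exp} and argue by exhibiting a concrete witness pair from the BREC dataset rather than reasoning abstractly. I would locate a pair of non-isomorphic, 1-WL-indistinguishable graphs --- a natural candidate is again Pair~$0$ of the ``Basic'' category, or another pair whose lifting produces hyperedges of varying size --- and show that their graph-level $k$-RWPE encodings coincide as multisets over the nodes, while their H-$k$-RWPE encodings do not. Establishing these two facts yields both claims of the theorem simultaneously.

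The ``at least as expressive'' direction is routine: augmenting the node features of an MPGNN with any fixed encoding can only increase its distinguishing power, so an MPGNN equipped with H-$k$-RWPE is at least as expressive as a plain MPGNN, which by~\citet{xu2018powerful} matches 1-WL. Strictness then reduces to the witness pair: since the pair is 1-WL-indistinguishable by construction, neither 1-WL nor a plain MPGNN can separate it, whereas if the two graphs receive distinct multisets of H-$k$-RWPE vectors, a sufficiently expressive readout (e.g.\ GIN) distinguishes them. The second claim of the theorem is handled by the \emph{same} pair, provided we also verify that graph-level $k$-RWPE assigns identical encodings to the two graphs.

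The substantive content is the return-probability computation on the lifting. I would form the hypergraph lifting of each graph, build the transition matrix $RW$ from the EE measure of Def.~\ref{def:hrw}, and compare the diagonal entries of $RW, RW^2, \dots, RW^k$. The hypothesis $k \geq 2$ is essential: for a self-loop-free walk the $1$-step return probabilities $RW_{ii}$ all vanish, so H-$1$-RWPE is uninformative, and the first useful coordinate is the $2$-step return probability $RW^2_{ii} = \sum_j RW_{ij} RW_{ji}$. Under the EE scheme each transition weight carries a factor $1/(|e|-1)$ set by the size of the hyperedge traversed, so this $2$-step return probability depends explicitly on the sizes of the hyperedges incident to node $i$ --- precisely the higher-order data that the clique expansion collapses and that graph-level $k$-RWPE therefore cannot see. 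I would note here that the EE scheme, not the EN scheme, is the right choice: the EN measure is uniform over the clique-expansion neighborhood $\mathcal{N}_i$ and so reduces to an ordinary walk that need not exceed graph-level power, whereas EE genuinely encodes hyperedge sizes. The claim then becomes that the witness pair agrees on all graph-level return-probability profiles (which follows once the two graphs share the pairwise neighborhood statistics that $k$-RWPE reads off) yet disagrees on the $2$-step EE return probabilities because their liftings have different hyperedge-size distributions, as displayed in Fig.~\ref{fig:pair-0-main}.

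The main obstacle is confirming that graph-level $k$-RWPE genuinely fails on the chosen pair. Return-probability encodings already separate many 1-WL-indistinguishable graphs, so the pair must be selected, and the graph-level $k$-RWPE profiles explicitly computed, to guarantee that the graph-level encoding collapses while the hypergraph lifting does not. Once such a pair is fixed and the two return-probability computations are carried out --- ideally reduced to a short symbolic calculation for each distinct hyperedge size --- both assertions of the theorem follow; verifying that $k=2$ already suffices and that the multisets differ is then a finite check.
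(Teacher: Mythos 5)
Your proposal takes essentially the same route as the paper: it exhibits Pair~0 of the ``Basic'' category of BREC as the witness, verifies that the graph-level $2$-RWPE multisets coincide while the H-$2$-RWPE encodings (computed via the EE scheme on the lifting) differ, and gets strictness over 1-WL from the standard argument that added encodings can only increase the power of an MPGNN. Your additional remarks --- that $k \geq 2$ is needed because $1$-step return probabilities vanish, and that the EE rather than EN scheme is what injects hyperedge-size information beyond the clique expansion --- are correct and sharpen the argument, but the underlying proof is the paper's own.
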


\begin{proof}
Pair $0$ of the ''Basic'' category in BREC 
is a pair of non-isomorphic graphs that cannot be distinguished with 1-WL. The pair cannot be distinguished with 2-RWPE computed at the graph level, but can be distinguished using the H-2-RWPE encodings computed at the hypergraph level (see \ref{appendix-pair-0}).
\end{proof}

\begin{rmk} Note that $k$-RWPE is less expressive that (k+1)-RWPE and and H-$k$-RWPE is less expressive than ($k+1$)-H-RWPE.
\end{rmk}

\begin{rmk}\label{thm:lape_comp} \textbf{(H-$k$-RWPE Complexity).} Computing H-$k$-RWPE scales as $O(|V| d^k_\text{max})$, where $d_\text{max}$ is the highest node degree in the input hypergraph. The degree $d_i$ of a vertex $i$ of an undirected hypergraph $H = (V,E)$ is the number of hyperedges that contain $i$ \citep{klamt2009hypergraphs}.
\end{rmk}

\subsection{Local Curvature Profiles}
Recently it was shown that discrete Ricci curvature yields an effective structural encoding at the graph level~\citep{fesser2023effective}.
Ricci curvature is a classical tool from Differential Geometry that allows for characterizing local and global properties of geodesic spaces. Discrete analogues of Ricci curvature~\citep{forman,ollivier2007ricci} have been studied extensively on graphs and, more recently, on hypergraphs~\citep{leal2021forman,coupette2022ollivier,saucan2019forman}. Here, we focus on defining hypergraph-level curvatures, we defer all details on graph-level notions to Apx.~\ref{apx:curvature}. 

We restrict ourselves to two notions of discrete Ricci curvature, originally introduced by Forman~\citep{forman} and Ollivier~\citep{ollivier2007ricci}, which have previously been considered for graph-level encodings. We begin with Forman's curvature:\\
\begin{defn}\label{def:fr} \textbf{(Forman's Ricci Curvature on hypergraphs (H-FRC)~\citep{leal2021forman}).} The H-FRC of a hyperedge $e$ is defined as  $F(e)=\sum_{k \in e} (2-d_k)$.
\end{defn}
Ollivier's Ricci curvature derives from a fundamental relationship between Ricci curvature and the behavior of random walks on geodesic spaces. To define an analogous notion on hypergraphs, we leverage again the previously introduced notions of random walks~\citep{coupette2022ollivier}.\\

\begin{defn}\label{def:orc} \textbf{(Ollivier's Ricci Curvature on hypergraphs (H-ORC)~\citep{coupette2022ollivier}).}
\noindent The H-ORC of a subset $s$ of nodes on a hypergraph is defined as:
\begin{equation} \kappa(s) = 1 - \frac{AGG(s)}{d(s)}\end{equation}
where $d(s)=\{\max d(i,j) | \{i,j\} \subseteq s\}$. We define for a
hyperedge $e$
\begin{equation} \kappa(e) = 1 - AGG(e) \; .\end{equation}
Here, $AGG(\cdot)$ denotes an aggregation function.
\end{defn}

Different types of aggregations could be considered for the choice of $AGG(\cdot)$. Here, we choose $AGG(\cdot)$ to be the average of the distances  between all  pairs $\{i,j\}$ in a hyperedge $e$, i.e.,
\begin{align} 
AGG (e) 
& =\frac{1}{{|e|\choose2}} \sum_{\{i,j\} \subseteq e } W_1(\mu_i, \mu_j) \; .
\end{align}

We can now define the actual encoding, extending Local Curvature Profiles (LCP)~\citep{fesser2023effective}, computed at the graph level, to hypergraphs.

\begin{defn}\label{def:hcp} \textbf{(Hypergraph Curvature Profile (HCP)).} For  \( v \in V \) let \( \text{CMS}(v) \) denote a \emph{curvature multi-set} consisting of the curvatures of all hyperedges containing $v$,
$\text{CMS}(v) = \{\kappa(e) : v \in e, e \in E\}$,
where $\kappa$ may be chosen to denote either FRC or ORC. We define HCP
as the following five summary statistics of $CMS(v)$:
\begin{equation}\label{HCP}
    \text{HCP}(v) = \left[\min(\text{CMS}(v)), \max(\text{CMS}(v)), {\rm mean}(\text{CMS}(v)), 
    {\rm median}(\text{CMS}(v)), {\rm std}(\text{CMS}(v)) \right] \; .
\end{equation}
\end{defn}
As for the other proposed encodings, we investigate the expressivity of HCP.
Note that ORC computed at the graph level is by itself very expressive, leading to LCP provably enhancing the expressivity of MPGNNs. In fact, there exist variants of ORC which can distinguish graphs that are not 3-WL distinguishable~\citep{southern2023expressive}. However, the same is not true for graph-level FRC. This merits a closer analysis of HCP where $\kappa$ is chosen to be the H-FRC.\\
\begin{theorem}\label{thm:hcp_exp} \textbf{(HCP Expressivity).} MPGNNs with HCP ($\kappa$ denoting H-FRC) are strictly more expressive than the 1-WL test and hence than MPGNNs without encodings. In contrast, leveraging LCP with standard FRC at the graph level does not enhance expressivity.
\end{theorem}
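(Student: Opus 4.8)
The plan is to split the statement into its positive half (MPGNNs with HCP exceed 1-WL) and its negative half (LCP with graph-level FRC does not), and to anchor both on the single separating pair already used in Theorems~\ref{thm:lape_exp} and~\ref{thm:rwpe_exp}, namely Pair $0$ of the ``Basic'' category of BREC, which is non-isomorphic yet 1-WL-indistinguishable. For the positive half I would first record the standard lower bound: appending HCP to the node attributes cannot decrease expressivity, since a maximally expressive MPGNN (GIN) can ignore the extra coordinates and still reproduce the 1-WL colour refinement; hence MPGNNs with HCP are at least as powerful as 1-WL. Strictness then follows from exhibiting one pair that HCP separates but 1-WL does not.

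For that separating pair I would take Pair $0$, lift each graph to a hypergraph as in Fig.~\ref{fig:pair-0-main}, and compute the hyperedge curvatures $F(e)=\sum_{k\in e}(2-d_k)$ with $d_k$ the hypergraph degree. The claim is that the induced curvature multi-sets $\mathrm{CMS}(v)$, and therefore the five HCP summary statistics, take a different multiset of values across the two liftings, so that a GIN reading HCP-augmented features separates the graphs at the readout. The per-pair arithmetic verifying that the HCP values genuinely differ is exactly the computation deferred to Apx.~\ref{appendix-pair-0}; the role of the lift is to expose hyperedge sizes and hypergraph degrees that are not functions of the 1-WL colours of the underlying graph.

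For the negative half the key observation is structural rather than computational: the graph-level Forman curvature of an edge $(u,v)$ depends only on the endpoint degrees $d_u,d_v$ (it carries no higher-order correction term). Consequently $\mathrm{LCP}(v)$, being summary statistics of $\{F(u,v):u\sim v\}$, is a function of $d_v$ together with the multiset of neighbour degrees $\{d_u:u\sim v\}$ — precisely the quantity that 1-WL refines at every node. Therefore MPGNNs augmented with $\mathrm{LCP}$ built from standard FRC remain bounded above by 1-WL and cannot separate any 1-WL-equivalent pair; in particular they fail on Pair $0$ (and, on the regular instances in this category, every edge even shares the same FRC value, so the LCP vectors are literally constant and identical across both graphs).

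The main obstacle is the middle step: certifying that H-FRC on the lifted hypergraph actually yields distinct HCP feature multisets. This is where the higher-order information must do real work, and it requires both fixing the lifting that turns the pair into hypergraphs with differing hyperedge sizes or differing hypergraph degrees, and then checking by direct computation that the resulting $\mathrm{CMS}(v)$ distributions disagree. By contrast, the $\ge$ 1-WL lower bound and the observation that graph-level FRC is a degree functional are routine and carry no hidden difficulty.
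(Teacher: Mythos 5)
Your proposal is correct, but it diverges from the paper's proof in two ways worth noting. For the positive half, the paper's primary example is not Pair~0 of BREC but the $4\times 4$ Rook graph versus the Shrikhande graph: both are srg(16,6,2,2), indistinguishable by $k$-WL for $k\le 3$, with identical LCP-FRC $[-8,-8,-8,-8,0]$ (every node has degree $6$), yet after lifting, the Rook hypergraph has hyperedges of size $4$ with all node degrees $2$ (giving HCP-FRC $[0,0,0,0,0]$) while the Shrikhande hypergraph has hyperedges of size $3$ with all node degrees $6$ (giving HCP-FRC $[-12,-12,-12,-12,0]$); Pair~0 of BREC appears only as a supplementary example (Apx.~\ref{appendix-pair-0}, which does confirm the arithmetic you defer to it). The Rook/Shrikhande choice buys a strictly stronger conclusion than yours: HCP-FRC separates graphs that even $3$-WL cannot, not merely a $1$-WL-hard pair. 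For the negative half, your route is genuinely different and in fact more rigorous than the paper's: the paper only exhibits the Rook/Shrikhande pair as a failure case of LCP-FRC, which demonstrates failure on one pair but does not by itself establish the general claim that LCP-FRC ``does not enhance expressivity.'' Your structural argument --- that $\mathcal{FR}(u,v)=4-\deg(u)-\deg(v)$ makes $\mathrm{LCP}(v)$ a function of $d_v$ and the neighbour-degree multiset, both of which are determined after two rounds of $1$-WL refinement, so the augmented model remains bounded by $1$-WL on every input --- is the argument actually needed for that general statement, and it is a useful strengthening of what the paper writes.
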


\begin{proof}
Consider again the 4 by 4 Rook and the Shrikhande graphs, which cannot be distinguished by the $k$-WL test for $k \leq 3$.
All nodes in both graphs have identical LCP-FRC, namely $[-8, -8, -8, -8, 0]$. This is because all nodes have degree $6$, consequently their FRC is $-8$. However, when computing HCP-FRC on the lifted hypergraphs the curvatures differ: In the Rook graph, all nodes have HCP-FRC $[0, 0, 0, 0, 0]$, whereas in the Shrikhande graph all nodes have HCP-FRC $[-12, -12, -12, -12,  0]$ (see \ref{appendix-rook-shrikhande}). Furthermore, it is possible to find non-isomorphic graphs with the same LCP, but different HCP (even up to scaling): Pair 0 of the “Basic” category in BREC is an example where both graphs have the same LCP, but different HCP (even up to scaling) (see Apx.~\ref{appendix-pair-0} for additional details).
\end{proof}

\begin{rmk}\label{thm:lape_comp} \textbf{(HCP Complexity).} Computing the H-FRC and hence the $\text{HCP-FRC}$ scales as $O(|E|e_\text{max})$, where $e_\text{max}$ denotes the size of the largest hyperedge. On the other hand, computing H-ORC incurs significant computational cost: The computation of the $W_1$-distance, which scales as $(|E|e_\text{max}^3)$, introduces a significant bottleneck. Hence, HCP-FRC has significant scaling advantages over HCP-ORC.
\end{rmk}

\subsection{Local Degree Profile}
Lastly, we define a hypergraph-level notion of \emph{Local Degree Profiles (LDP)}~\citep{cai2018simple}, which captures structural information encoded in the node degree distribution over a node's 1-hop neighborhood. We consider the multi-set of node degrees in the 1-hop neighborhood of a node $v$, i.e.,
${\rm DN}(v)=\{d_u|u\in \mathcal{N}_v\}$ and define
\begin{align*} \label{ldp}
\begin{split}
\text{LDP}(v) = [& d_v, \min(
{\rm DN}(v), \max({\rm DN}(v)),  {\rm mean}({\rm DN}(v)), {\rm median}({\rm DN}(v)), {\rm std} ({\rm DN}(v))] \; .
\end{split} 
\end{align*}
An analogous notion on the hypergraph level (H-LDP) can be defined by a simple extension. Again, H-LDP exhibits improved expressivity:\\

\begin{theorem}\label{thm:ldp_exp} \textbf{(H-LDP Expressivity).} MPGNNs with  H-LDP are strictly more expressive than the 1-WL test and hence than MPGNNs without encodings. There exist graphs which can be distinguished using H-LDP, but not using LDP.
\end{theorem}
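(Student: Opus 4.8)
The plan is to follow the same witness-graph template used in Theorems~\ref{thm:lape_exp}--\ref{thm:hcp_exp}: exhibit explicit non-isomorphic pairs on which H-LDP separates the graphs while the relevant baseline cannot. Since MPGNNs are exactly as expressive as 1-WL~\citep{xu2018powerful}, and since augmenting every node with a fixed H-LDP feature makes two graphs MPGNN-distinguishable precisely when the resulting multisets of (initial feature, H-LDP) pairs differ, the whole problem reduces to comparing H-LDP feature multisets on suitable pairs. First I would fix the lifting from graphs to hypergraphs (Apx.~\ref{appendix-gtohg}) and record that the hypergraph degree $d_v$ is the number of hyperedges containing $v$, so that H-LDP collects $d_v$ together with the min/max/mean/median/std of the hypergraph degrees over the neighbors of $v$.

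For the strict improvement over 1-WL I would take a 1-WL-indistinguishable pair and exhibit a discrepancy in the H-LDP multisets. The $4\times 4$ Rook and Shrikhande graphs are the cleanest witness and settle the second claim simultaneously: both are $6$-regular, so every node has graph-level LDP $[6,6,6,6,6,0]$ and LDP cannot separate them. Under the lifting, however, their hyperedge structures differ sharply -- the Rook graph lifts to size-$4$ clique hyperedges with hypergraph degree $d_v=2$ at every node, whereas the Shrikhande graph lifts to triangular hyperedges with hypergraph degree $d_v=6$. These are exactly the degree counts underlying the HCP-FRC values $0$ versus $-12$ established in the proof of Theorem~\ref{thm:hcp_exp}, so I can reuse that computation rather than redo it. Hence the two graphs already differ in the first H-LDP coordinate $d_v$, and a fortiori in the neighbor-degree statistics, so H-LDP distinguishes them while LDP does not; because the pair is 1-WL-indistinguishable, this also yields the claimed strict improvement.

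To make both claims airtight I would then verify that (i) the H-LDP discrepancy is genuinely invariant, i.e. it reflects a difference in the node-level H-LDP \emph{multisets} rather than an artifact of a particular vertex labeling, and (ii) the lifted hypergraphs are well-defined and isomorphism-respecting, so that the separation transfers to MPGNNs via the 1-WL equivalence. For completeness I would also record, as in Apx.~\ref{appendix-pair-0}, the per-node H-LDP vectors on Pair $0$ of the ``Basic'' category in BREC to supply a second 1-WL-indistinguishable example. The main obstacle is verificational rather than conceptual: one must confirm that the lifting produces the claimed hyperedge sizes and hypergraph degrees and that the resulting H-LDP multisets truly differ, since for highly symmetric strongly regular graphs many statistics coincide and only the higher-order clique-size structure exposed by the lift breaks the tie.
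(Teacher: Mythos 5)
Your proposal is correct and follows essentially the same route as the paper's proof: it uses the $4\times 4$ Rook and Shrikhande graphs as the witness pair (identical graph-level LDPs $[6,6,6,6,6,0]$ by $6$-regularity, but H-LDPs $[2,2,2,2,2,0]$ versus $[6,6,6,6,6,0]$ after lifting, with 1-WL-indistinguishability giving strictness), and supplements it with Pair~0 of the BREC ``Basic'' category, exactly as the paper does to address the scaling/normalization caveat. The only difference is presentational: you make explicit the reduction via the MPGNN--1-WL equivalence and the multiset-invariance check, which the paper leaves implicit.
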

\begin{proof}
    The 4 by 4 Rook graph and the Shrikhande graph cannot be distinguished by LDP, as all nodes the same degree, resulting in LDPs  $[6,6,6,6,6,0]$. However, they can be distinguished using H-LDP: The nodes in the Rook graph have H-LDP $[2, 2, 2, 2, 2, 0]$, the nodes in the Shrikhande graph $[6, 6, 6, 6, 6, 0]$ (see Apx.~\ref{appendix-rook-shrikhande}). Furthermore, it is possible to find non-isomorphic graphs with the same LDP, but different H-LDP even up to scaling: Pair 0 of the “Basic” category in BREC is an example, where both graphs have identical LDPs, but different H-LDPs, even up to scaling. For more details, see Fig.~\ref{fig:pair-0-main} and Apx.~\ref{appendix-pair-0}.
\end{proof}

\begin{rmk}
    We observe that in the examples demonstrating the enhanced representational power of HCP and H-LDP, the respective profiles are scalar multiples of each other. It is common in hypergraph architectures to normalize node attributes during preprocessing, which would obscure the structural differences captured by the two encodings. However, we emphasize that no such preprocessing is applied in our experiments.
\end{rmk}

\section{Experiments}
\label{experiments}

\begin{table*}[t!]
\centering
\tiny
\begin{tabular}{|l|c|c|c|c|c|}
\hline
\textbf{Model (Encodings)} & \textbf{citeseer-CC} ($\uparrow$) & \textbf{cora-CA} ($\uparrow$) & \textbf{cora-CC} ($\uparrow$) & \textbf{pubmed-CC} ($\uparrow$) & \textbf{DBLP} ($\uparrow$) \\
\hline
GCN (No Encoding) & $69.28 \pm 0.28$ & $76.51 \pm 0.82$ & $75.43 \pm 0.26$ & $84.66 \pm 0.49$ & $75.66 \pm 0.81$ \\
GCN (HCP-FRC) & $\mathbf{71.03 \pm 0.51}$ & $78.43 \pm 0.76$ & $\mathbf{76.61 \pm 0.31}$ & $84.78 \pm 0.57$ & $76.49 \pm 0.90$ \\
GCN (HCP-ORC) & $70.89 \pm 0.54$ & $79.25 \pm 0.81$ & $76.09 \pm 0.70$ & $85.12 \pm 0.61$ & $76.57 \pm 0.85$ \\
GCN (EE H-19-RWPE) & $69.63 \pm 0.71$ & $76.84 \pm 0.69$ & $75.92 \pm 0.28$ & $86.24 \pm 0.63$ & $76.18 \pm 0.88$ \\
GCN (EN H-19-RWPE) & $68.85 \pm 0.91$ & $77.19 \pm 0.64$ & $75.33 \pm 0.35$ & $\mathbf{86.53 \pm 0.61}$ & $76.76 \pm 0.84$ \\
GCN (Hodge H-20-LAPE) & $69.61 \pm 0.45$ & $\mathbf{79.61 \pm 0.85}$ & $75.62 \pm 0.31$ & $86.06 \pm 0.52$ & $\mathbf{77.48 \pm 0.93}$ \\
GCN (Norm. H-20-LAPE) & $69.13 \pm 0.77$ & $78.13 \pm 0.79$ & $76.18 \pm 0.29$ & $85.78 \pm 0.55$ & $76.92 \pm 0.88$ \\
\hline
UniGCN (No Encoding) & $63.36 \pm 1.76$ & $75.72 \pm 1.16$ & $71.10 \pm 1.37$ & $75.32 \pm 1.09$ & $71.05 \pm 1.40$ \\  
UniGCN (HCP-FRC) & $61.20 \pm 1.83$ & $74.64 \pm 1.45$ & $68.98 \pm 1.59$ & $67.37 \pm 1.73$ & $71.02 \pm 1.43$ \\ 
UniGCN (HCP-ORC) & $61.81 \pm 1.70$ & $75.03 \pm 1.33$ & $70.42 \pm 1.17$ & $71.64 \pm 1.52$ & $70.69 \pm 1.62$ \\
UniGCN (EE H-19-RWPEE) & $63.29 \pm 1.52$ & $75.34 \pm 1.28$ & $71.13 \pm 1.24$ & $74.61 \pm 1.18$ & $71.21 \pm 1.53$ \\  
UniGCN (EN H-19-RWPEE) & $63.09 \pm 1.62$ & $75.30 \pm 1.37$ & $71.21 \pm 1.34$ & $74.61 \pm 1.09$ & $71.26 \pm 1.47$  \\  
UniGCN (Hodge H-20-LAPE) & $63.46 \pm 1.58$ & $75.64 \pm 1.37$ & $71.31 \pm 1.19$ & $75.37 \pm 1.01$ & $70.71 \pm 1.61$ \\  
UniGCN (Norm. H-20-LAPE) & $63.41 \pm 1.61$ & $75.55 \pm 1.48$ & $71.20 \pm 1.24$ & $75.30 \pm 1.01$ & $71.10 \pm 1.33$ \\  
\hline
\end{tabular}
\caption{GCN and UniGCN performance on hypergraph datasets with different hypergraph encodings. We report mean accuracy and standard deviation over 50 runs.}
\label{tab:node}
\end{table*}

\subsection{Experimental setup}
Throughout all of our experiments, we treat the computation of encodings as a preprocessing step, which is first applied to all graphs in the data sets considered. We then train a GNN on a part of the preprocessed graphs and evaluate its performance on a withheld set of test graphs (nodes in the case of node classification). Settings and optimization hyperparameters are held constant across tasks and baseline models for all encodings, so that hyperparameter tuning can be ruled out as a source of performance gain. We obtain the settings for the individual encoding types via hyperparameter tuning. For all preprocessing methods and hyperparameter choices, we record the test set performance of the settings with the best validation performance. As there is a certain stochasticity involved, especially when training neural networks, we accumulate experimental results across 50 random trials. We report the mean test accuracy, along with the 95$\%$ confidence interval for the node classification datasets in Tab.~\ref{tab:node} and for the datasets in Tab.~\ref{tab:gcn} and \ref{tab:gps}. For Peptides-func, we report average precision and for Peptides-struct the mean absolute error (MAE). Details on all datasets can be found in Apx.~\ref{appendix-datasets}.

\subsection{Comparison of hypergraph- and graph-level architectures}
We begin by comparing the utility of our encodings for message-passing architectures that operate at the graph or at the hypergraph level. Hypergraph neural networks are predominantly used for node classification in hypergraphs. In fact, we are not aware of hypergraph classification datasets analogous to the graph datasets used in the previous subsection. As such, we choose five common hypergraph node classification datasets: Cora-CA, Cora-CC, Citeseer, DBLP, and Pubmed. We use clique expansion to convert these hypergraphs into graphs (empirically, we found this to be the best performing expansion) and train GCN on them with either no encoding or one of our hypergraph encodings. As a hypergraph-level message-passing architecture, we use UniGCN \citep{huang2021unignn}. Additional experiments with UniGIN and UniGAT are presented in Apx.~\ref{appendix:ablations}, along with a detailed explanation of the clique expansions we use.\\

\noindent \textbf{Graph-level message-passing benefits from hypergraph-level encodings.} Our results are presented in Tab.~\ref{tab:node}. Somewhat surprisingly, we note that even on these datasets, which are originally hypergraphs, GCN with no encodings outperforms UniGCN. Perhaps even more surprising, UniGCN does not seem to benefit from any of the encodings provided. Apx.~ \ref{appendix:ablations} shows that the same holds true for UniGIN and UniGAT. GCN on the other hand clearly benefits from (most) hypergraph-level encodings, although admittedly less so than when used for graph classification. Previous work has reported similar differences in the utility of encodings for graph and node classification tasks. Overall, we take our results in this subsection and in Apx.~\ref{appendix:ablations} as evidence that our proposed hypergraph-level encodings present a strong alternative to established message-passing architectures at the hypergraph level.

\begin{table*}[t!]
\centering
\tiny
\begin{tabular}{|l|c|c|c|c|c|c|c|}
\hline
\textbf{Model (Encodings)} & \textbf{Collab} ($\uparrow$) & \textbf{Imdb} ($\uparrow$) & \textbf{Reddit} ($\uparrow$) & \textbf{Enzymes} ($\uparrow$) & \textbf{Proteins } ($\uparrow$) & \textbf{Peptides-f} ($\uparrow$) & \textbf{Peptides-s ($\downarrow$)} \\
\hline
GCN (No Encoding)         & $61.94 \pm 1.27$ & $48.10 \pm 1.02$ & $67.87 \pm 1.38$ & $28.03 \pm 1.15$ & $71.48 \pm 0.90$ & $0.532 \pm 0.005$ & $0.266 \pm 0.002$\\ \hline
GCN (LCP-FRC)    & $68.36 \pm 1.13$ & $63.42 \pm 1.47$ & $79.53 \pm 1.62$ & $27.66 \pm 1.48$ & $70.89 \pm 1.16$ & $0.537 \pm 0.006$ & $0.261 \pm 0.003$ \\
GCN (LCP-ORC)    & $70.48 \pm 0.97$ & $67.93 \pm 1.55$ & $80.75 \pm 1.54$ & $\mathbf{33.17 \pm 1.43}$ & $\mathbf{74.22 \pm 1.77}$ & $\mathbf{0.561 \pm 0.005}$ & $\mathbf{0.252 \pm 0.004}$ \\
GCN (19-RWPE)       & $49.63 \pm 2.38$ & $50.41 \pm 1.26$ & $78.93 \pm 1.60$ & $30.66 \pm 1.78$ & $71.94 \pm 1.58$ & $0.538 \pm 0.007$ & $0.265 \pm 0.003$ \\
GCN (20-LAPE)       & $58.33 \pm 1.64$ & $48.82 \pm 1.31$ & $77.26 \pm 1.58$ & $28.52 \pm 1.16$ & $71.46 \pm 1.52$ & $0.534 \pm 0.006$ & $0.258 \pm 0.003$\\
\hline
GCN (HCP-FRC) & $\mathbf{72.03 \pm 0.51}$ & $64.64 \pm 0.88$ & $82.09 \pm 0.58$ & $30.87 \pm 1.38$ & $71.27 \pm 1.20$ & $0.559 \pm 0.004$ & $0.255 \pm 0.004$ \\ 
GCN (HCP-ORC) & $70.82 \pm 0.68$ & $66.16 \pm 0.75$ & $80.35 \pm 0.72$ & $32.83 \pm 1.36$ & $73.78 \pm 1.25$ & $0.559 \pm 0.004$ & $0.258 \pm 0.003$ \\
GCN (EE H-19-RWPE) & $69.63 \pm 0.71$ & $\mathbf{73.96 \pm 0.65}$ & $82.79 \pm 0.62$ & $31.74 \pm 1.30$ & $73.83 \pm 1.08$ & $0.546 \pm 0.006$ & $0.263 \pm 0.003$ \\ 
GCN (EN H-19-RWPE) & $68.85 \pm 0.91$ & $73.84 \pm 0.48$ & $\mathbf{83.30 \pm 0.79}$ & $30.93 \pm 1.27$ & $74.05 \pm 1.13$ & $0.549 \pm 0.005$ & $0.263 \pm 0.003$ \\ 
GCN (Hodge H-20-LAPE) & $69.61 \pm 0.45$ & $71.38 \pm 0.75$ & $79.46 \pm 0.82$ & $29.46 \pm 1.14$ & $72.89 \pm 1.31$ & $0.557 \pm 0.005$ & $0.254 \pm 0.003$ \\ 
GCN (Norm. H-20-LAPE) & $69.13 \pm 0.77$ & $71.05 \pm 0.82$ & $80.08 \pm 0.67$ & $29.60 \pm 1.21$ & $73.12 \pm 1.36$ & $0.557 \pm 0.006$ & $0.253 \pm 0.003$ \\ \hline
\end{tabular}
\caption{GCN performance with graph level encodings (top) and hypergraph level encodings (bottom). We report mean and standard deviation across 50 runs.}
\label{tab:gcn}
\end{table*}

\begin{table*}[ht!]
\centering
\tiny
\begin{tabular}{|l|c|c|c|c|c|c|c|}
\hline
\textbf{Model (Encodings)} & \textbf{Collab} ($\uparrow$) & \textbf{Imdb} ($\uparrow$) & \textbf{Reddit} ($\uparrow$) & \textbf{Enzymes} ($\uparrow$) & \textbf{Proteins} ($\uparrow$) & \textbf{Peptides-f} ($\uparrow$) & \textbf{Peptides-s} ($\downarrow$) \\
\hline
GPS (No Encoding) & $74.17 \pm 1.33$ & $70.93 \pm 1.21$ & $80.94 \pm 1.42$ & $46.83 \pm 1.14$ & $74.10 \pm 0.98$ & $0.593 \pm 0.009$ & $0.262 \pm 0.003$ \\ \hline
GPS (LCP-FRC) & $74.22 \pm 1.27$ & $71.46 \pm 1.77$ & $80.53 \pm 1.55$ & $43.75 \pm 1.39$ & $73.38 \pm 1.07$ & $0.598 \pm 0.010$ & $0.257 \pm 0.003$ \\
GPS (LCP-ORC) & $74.52 \pm 1.18$ & $71.84 \pm 1.26$ & $82.83 \pm 1.47$ & $48.51 \pm 1.58$ & $74.88 \pm 1.20$ & $0.613 \pm 0.010$ & $0.252 \pm 0.003$ \\
GPS (19-RWPE) & $74.29 \pm 1.42$ & $66.40 \pm 1.53$ & $81.92 \pm 1.31$ & $51.09 \pm 1.64$ & $71.92 \pm 1.18$ & $0.594 \pm 0.011$ & $0.257 \pm 0.003$ \\
GPS (20-LAPE) & $74.74 \pm 1.23$ & $70.67 \pm 1.18$ & $82.05 \pm 1.29$ & $42.90 \pm 1.35$ & $71.46 \pm 1.25$ & $0.599 \pm 0.011$ & $0.253 \pm 0.003$ \\
\hline
GPS (HCP-FRC) & $73.37 \pm 1.59$ & $71.48 \pm 1.03$ & $81.68 \pm 1.16$ & $47.66 \pm 0.92$ & $74.50 \pm 1.13$ & $0.604 \pm 0.010$ & $0.254 \pm 0.003$ \\
GPS (HCP-ORC) & $74.18 \pm 1.22$ & $72.05 \pm 1.15$ & $83.07 \pm 1.24$ & $48.19 \pm 1.31$ & $74.52 \pm 1.20$ & $0.609 \pm 0.010$ & $0.254 \pm 0.004$ \\
GPS (EE H-19-RWPE) & $\mathbf{76.19 \pm 1.29}$ & $\mathbf{73.19 \pm 1.07}$ & $84.04 \pm 1.07$ & $\mathbf{51.83 \pm 1.07}$ & $\mathbf{75.08 \pm 1.14}$ & $0.615 \pm 0.009$ & $\mathbf{0.251 \pm 0.003}$ \\
GPS (EN H-19-RWPE) & $75.92 \pm 1.33$ & $73.08 \pm 1.24$ & $\mathbf{84.25 \pm 1.13}$ & $51.28 \pm 1.12$ & $74.82 \pm 1.11$ & $\mathbf{0.617 \pm 0.010}$ & $0.252 \pm 0.003$ \\
GPS (Hodge H-20-LAPE) & $76.10 \pm 1.16$ & $73.15 \pm 1.02$ & $83.97 \pm 1.21$ & $47.44 \pm 1.16$ & $73.95 \pm 1.08$ & $0.602 \pm 0.010$ & $0.252 \pm 0.003$ \\
GPS (Norm. H-20-LAPE) & $75.81 \pm 1.21$ & $72.94 \pm 1.18$ & $83.85 \pm 1.18$ & $47.78 \pm 0.98$ & $74.03 \pm 1.10$ & $0.604 \pm 0.010$ & $0.254 \pm 0.002$ \\
\hline
\end{tabular}
\caption{GPS performance with graph level encodings (top) and hypergraph level encodings (bottom). We report mean and standard deviation across 50 runs.}
\label{tab:gps}
\end{table*}

\subsection{Hypergraph-level encodings capture higher-order information effectively }

We now evaluate to what extent our hypergraph encodings can be used for datasets that are originally graph-structured. We lift these graphs to the hypergraph level (see. Apx.~\ref{appendix-hgtog} for details) and compare against encodings computed at the graph level. Tables \ref{tab:gcn} and \ref{tab:gps} report results for GCN and GPS; additional results with GIN can be found in Apx.~\ref{appendix:additional_results}. \\

\noindent \textbf{Performance gains with hypergraph-level encodings.} We note several things: 1) adding encodings is beneficial in nearly all scenarios, 2) encodings computed at the hypergraph level are always at least as beneficial as their cousins computed at the graph level (e.g. H-RWPE is at least as useful as RWPE), and 3) on social network datasets (Collab, Imdb, and Reddit), hypergraph encodings provide the largest performance boosts, often by a wide margin. This aligns with our intuition, as social networks can often naturally be thought of as hypergraphs.\\

\noindent \textbf{Positional vs structural encodings.} Our results with GPS confirm our observations with GCN. Hypergraph-level encodings significantly boost performance on almost all datasets (only Proteins is not statistically significant) and are generally more useful than their graph-level analogues. Further, while GCN usually performed best with local structural encodings such as the Local Curvature Profile, GraphGPS seems to benefit more from global positional encodings such as (Hypergraph-) Random Walk Positional Encodings. This aligns with previous findings in the literature using graph-level encodings \citep{fesser2023effective}.\\

\noindent \textbf{Utility beyond Weisfeiler-Lehman.} Our previous results on the BREC dataset indicate that much of the utility of our hypergrpah-level encodings can perhaps be attributed to improving the expressivity of GCN and GPS. To better quantify this, we run an additional suite of experiments on the Collab, Imdb, and Reddit datasets using the GIN. As noted previously, GIN is provably as powerful as the 1-WL test and therefore more expressive than GCN and GPS. Our results in Apx.~\ref{appendix:additional_results} show that GIN has indeed a higher baseline accuracy (without encodings) than GCN, and benefits significantly less from encodings than both GCN and GPS. Nevertheless, our hypergraph-level encodings significantly boost performance and again beat the gains obtained from graph-level encodings. We take this as evidence that providing information from domains other than the computational domain (graphs in our setting) provides benefits beyond increased expressivity.

\section{Discussion}\label{discussion}
In this study, we investigated the performance of hypergraph-level architectures in comparison with graph-level architectures for “multi-way” relational learning tasks. Additionally, we proposed hypergraph-level encodings as an alternative approach for leveraging higher-order relational information.\\

\noindent \textbf{Lessons for model design}
Our findings indicate that graph-level architectures applied to hypergraphs' clique expansions frequently outperform hypergraph-level architectures, even when the inputs are naturally parametrized as hypergraphs. While hypergraph-level encodings do not significantly enhance the performance of hypergraph-level architectures, they can lead to substantial performance gains when used in graph-level architectures. Notably, random walk-based (H-$k$-RWPE) and curvature-based encodings (HCP) were particularly effective across data sets. These insights suggest a graph-level architecture augmented with hypergraph-level encodings as a suitable model choice for a wide range of existing hypergraph learning tasks.\\

\noindent \textbf{Limitations} 
A key limitation of this study, and many of the related works, is a lack of benchmarks consisting of \emph{true} hypergraph-structured data. Many of the existing data sets consists of graphs that are reparametrized (``lifted'') to hypergraphs, or hypergraphs that can be easily reparametrized as graphs. This suggests the establishment of better benchmark as a key direction for future work. Given the promise of topological deep learning for scientific machine learning, we envision future benchmarks that are based on scientific data, such as~\citep{garcia2023chemically} or ~\citep{gjorgjieva2011triplet}, where multi-way interactions that are naturally parametrized as hypergraphs are known to arise.
Another limitation of this study arises in the choice of hypergraph architectures. While our selection was guided by top-performing models in recent benchmarks~\citep{huang2021unignn,telyatnikov2024topobenchmark}, a more comprehensive analysis could further strengthen the validity of the reported observations.\\

\noindent \textbf{Other Future Directions} 
Despite the aforementioned caveats regarding datasets and the breadth of architectures included in this study, our observations raise questions about the effectiveness of existing message-passing schemes on hypergraphs. We believe that a thorough analysis of these architectures’ ability to effectively encode higher-order information into learned representations is an important direction for future work. A possible lens for such an investigation could be graph reasoning tasks, as previously suggested in~\citep{luo2023expressiveness}. 

Additionally, the negative results observed regarding hypergraph-level encodings paired with hypergraph-level architectures warrant further exploration. Specifically, understanding how to effectively augment hypergraph inputs with structural and positional information that can be leveraged by hypergraph-level architectures is a promising direction for further study.

Lastly, while this study primarily focused on hypergraph learning, there are several other topological domains of interest, including simplicial complexes, polyhedral complexes, and more general CW complexes. Extending the present study to these domains represents another interesting avenue for further investigation.
\section{Broader Impacts}
This paper presents work whose goal is to advance our theoretical understanding of Machine Learning. There are many potential societal consequences of our work, none of which we feel must be specifically highlighted here.

\section*{Acknowledgments}
LF was supported by a Kempner Graduate Fellowship. MW was supported by NSF award CBET-2112085 and a Sloan Research Fellowship in Mathematics.

\bibliographystyle{plainnat}
\bibliography{references}

\begin{thebibliography}{56}
\providecommand{\natexlab}[1]{#1}
\providecommand{\url}[1]{\texttt{#1}}
\expandafter\ifx\csname urlstyle\endcsname\relax
  \providecommand{\doi}[1]{doi: #1}\else
  \providecommand{\doi}{doi: \begingroup \urlstyle{rm}\Url}\fi

\bibitem[Ames et~al.(1973)Ames, Durston, Yamasaki, and
  Lee]{ames1973carcinogens}
Bruce~N Ames, William~E Durston, Edith Yamasaki, and Frank~D Lee.
\newblock Carcinogens are mutagens: a simple test system combining liver
  homogenates for activation and bacteria for detection.
\newblock \emph{Proceedings of the National Academy of Sciences}, 70\penalty0
  (8):\penalty0 2281--2285, 1973.

\bibitem[Banerjee(2021)]{banerjee2021spectrum}
Anirban Banerjee.
\newblock On the spectrum of hypergraphs.
\newblock \emph{Linear algebra and its applications}, 614:\penalty0 82--110,
  2021.

\bibitem[Benson et~al.(2021)Benson, Gleich, and Higham]{benson2021higher}
Austin~R Benson, David~F Gleich, and Desmond~J Higham.
\newblock Higher-order network analysis takes off, fueled by classical ideas
  and new data.
\newblock \emph{arXiv preprint arXiv:2103.05031}, 2021.

\bibitem[Bick et~al.(2023)Bick, Gross, Harrington, and Schaub]{bick2023higher}
Christian Bick, Elizabeth Gross, Heather~A Harrington, and Michael~T Schaub.
\newblock What are higher-order networks?
\newblock \emph{SIAM Review}, 65\penalty0 (3):\penalty0 686--731, 2023.

\bibitem[Borgwardt et~al.(2005)Borgwardt, Ong, Sch{\"o}nauer, Vishwanathan,
  Smola, and Kriegel]{borgwardt2005protein}
Karsten~M Borgwardt, Cheng~Soon Ong, Stefan Sch{\"o}nauer, SVN Vishwanathan,
  Alex~J Smola, and Hans-Peter Kriegel.
\newblock Protein function prediction via graph kernels.
\newblock \emph{Bioinformatics}, 21\penalty0 (suppl\_1):\penalty0 i47--i56,
  2005.

\bibitem[Bouritsas et~al.(2022)Bouritsas, Frasca, Zafeiriou, and
  Bronstein]{bouritsas2022improving}
Giorgos Bouritsas, Fabrizio Frasca, Stefanos Zafeiriou, and Michael~M
  Bronstein.
\newblock Improving graph neural network expressivity via subgraph isomorphism
  counting.
\newblock \emph{IEEE Transactions on Pattern Analysis and Machine
  Intelligence}, 45\penalty0 (1):\penalty0 657--668, 2022.

\bibitem[Cai and Wang(2018)]{cai2018simple}
Chen Cai and Yusu Wang.
\newblock A simple yet effective baseline for non-attributed graph
  classification.
\newblock \emph{arXiv preprint arXiv:1811.03508}, 2018.

\bibitem[Chen et~al.(2020)Chen, Wei, Huang, Ding, and Li]{chen2020simple}
Ming Chen, Zhewei Wei, Zengfeng Huang, Bolin Ding, and Yaliang Li.
\newblock Simple and deep graph convolutional networks.
\newblock In \emph{International conference on machine learning}, pages
  1725--1735. PMLR, 2020.

\bibitem[Coupette et~al.(2022)Coupette, Dalleiger, and
  Rieck]{coupette2022ollivier}
Corinna Coupette, Sebastian Dalleiger, and Bastian Rieck.
\newblock Ollivier-ricci curvature for hypergraphs: A unified framework.
\newblock \emph{arXiv preprint arXiv:2210.12048}, 2022.

\bibitem[Debnath et~al.(1991)Debnath, Lopez~de Compadre, Debnath, Shusterman,
  and Hansch]{debnath1991structure}
Asim~Kumar Debnath, Rosa~L Lopez~de Compadre, Gargi Debnath, Alan~J Shusterman,
  and Corwin Hansch.
\newblock Structure-activity relationship of mutagenic aromatic and
  heteroaromatic nitro compounds. correlation with molecular orbital energies
  and hydrophobicity.
\newblock \emph{Journal of medicinal chemistry}, 34\penalty0 (2):\penalty0
  786--797, 1991.

\bibitem[Dwivedi et~al.(2021)Dwivedi, Luu, Laurent, Bengio, and
  Bresson]{dwivedi2021graph}
Vijay~Prakash Dwivedi, Anh~Tuan Luu, Thomas Laurent, Yoshua Bengio, and Xavier
  Bresson.
\newblock Graph neural networks with learnable structural and positional
  representations.
\newblock \emph{arXiv preprint arXiv:2110.07875}, 2021.

\bibitem[Dwivedi et~al.(2022)Dwivedi, Ramp{\'a}{\v{s}}ek, Galkin, Parviz, Wolf,
  Luu, and Beaini]{dwivedi2022long}
Vijay~Prakash Dwivedi, Ladislav Ramp{\'a}{\v{s}}ek, Michael Galkin, Ali Parviz,
  Guy Wolf, Anh~Tuan Luu, and Dominique Beaini.
\newblock Long range graph benchmark.
\newblock \emph{Advances in Neural Information Processing Systems},
  35:\penalty0 22326--22340, 2022.

\bibitem[Dwivedi et~al.(2023)Dwivedi, Joshi, Luu, Laurent, Bengio, and
  Bresson]{dwivedi2023benchmarking}
Vijay~Prakash Dwivedi, Chaitanya~K Joshi, Anh~Tuan Luu, Thomas Laurent, Yoshua
  Bengio, and Xavier Bresson.
\newblock Benchmarking graph neural networks.
\newblock \emph{Journal of Machine Learning Research}, 24\penalty0
  (43):\penalty0 1--48, 2023.

\bibitem[Feng et~al.(2019)Feng, You, Zhang, Ji, and Gao]{feng2019hypergraph}
Yifan Feng, Haoxuan You, Zizhao Zhang, Rongrong Ji, and Yue Gao.
\newblock Hypergraph neural networks.
\newblock In \emph{Proceedings of the AAAI conference on artificial
  intelligence}, volume~33, pages 3558--3565, 2019.

\bibitem[Fesser and Weber(2024{\natexlab{a}})]{fesser2023effective}
Lukas Fesser and Melanie Weber.
\newblock Effective structural encodings via local curvature profiles.
\newblock In \emph{International Conference on Learning Representations},
  2024{\natexlab{a}}.

\bibitem[Fesser and Weber(2024{\natexlab{b}})]{fesser2024mitigating}
Lukas Fesser and Melanie Weber.
\newblock Mitigating over-smoothing and over-squashing using augmentations of
  forman-ricci curvature.
\newblock In \emph{Learning on Graphs Conference}, pages 19--1. PMLR,
  2024{\natexlab{b}}.

\bibitem[Fesser et~al.(2024)Fesser, de~Haro~Iv{\'a}nez, Devriendt, Weber, and
  Lambiotte]{fesser2024augmentations}
Lukas Fesser, Sergio~Serrano de~Haro~Iv{\'a}nez, Karel Devriendt, Melanie
  Weber, and Renaud Lambiotte.
\newblock Augmentations of forman’s ricci curvature and their applications in
  community detection.
\newblock \emph{Journal of Physics: Complexity}, 5\penalty0 (3):\penalty0
  035010, 2024.

\bibitem[Forman(2003{\natexlab{a}})]{forman}
R.~Forman.
\newblock Bochner's method for cell complexes and combinatorial {R}icci
  curvature.
\newblock \emph{Discrete and Computational Geometry}, 29\penalty0 (3):\penalty0
  323--374, 2003{\natexlab{a}}.

\bibitem[Forman(2003{\natexlab{b}})]{Forman2003BochnersMF}
Robin Forman.
\newblock Bochner's method for cell complexes and combinatorial ricci
  curvature.
\newblock \emph{Discrete \& Computational Geometry}, 29:\penalty0 323--374,
  2003{\natexlab{b}}.
\newblock URL \url{https://api.semanticscholar.org/CorpusID:9584267}.

\bibitem[Garcia-Chung et~al.(2023)Garcia-Chung, Berm{\'u}dez-Monta{\~n}a,
  Stadler, Jost, and Restrepo]{garcia2023chemically}
Angel Garcia-Chung, Marisol Berm{\'u}dez-Monta{\~n}a, Peter~F Stadler,
  J{\"u}rgen Jost, and Guillermo Restrepo.
\newblock Chemically inspired erd$\backslash$h $\{$o$\}$ sr$\backslash$'enyi
  oriented hypergraphs.
\newblock \emph{arXiv preprint arXiv:2309.06351}, 2023.

\bibitem[Gjorgjieva et~al.(2011)Gjorgjieva, Clopath, Audet, and
  Pfister]{gjorgjieva2011triplet}
Julijana Gjorgjieva, Claudia Clopath, Juliette Audet, and Jean-Pascal Pfister.
\newblock A triplet spike-timing--dependent plasticity model generalizes the
  bienenstock--cooper--munro rule to higher-order spatiotemporal correlations.
\newblock \emph{Proceedings of the National Academy of Sciences}, 108\penalty0
  (48):\penalty0 19383--19388, 2011.

\bibitem[Gori et~al.(2005)Gori, Monfardini, and Scarselli]{gori2005new}
Marco Gori, Gabriele Monfardini, and Franco Scarselli.
\newblock A new model for learning in graph domains.
\newblock In \emph{Proceedings. 2005 IEEE international joint conference on
  neural networks}, volume~2, pages 729--734, 2005.

\bibitem[Hagberg et~al.(2008)Hagberg, Swart, and Schult]{hagberg2008exploring}
Aric Hagberg, Pieter~J Swart, and Daniel~A Schult.
\newblock Exploring network structure, dynamics, and function using networkx.
\newblock Technical report, Los Alamos National Laboratory (LANL), Los Alamos,
  NM (United States), 2008.

\bibitem[Hajij et~al.(2022)Hajij, Zamzmi, Papamarkou, Miolane,
  Guzm{\'a}n-S{\'a}enz, and Ramamurthy]{hajij2022higher}
Mustafa Hajij, Ghada Zamzmi, Theodore Papamarkou, Nina Miolane, Aldo
  Guzm{\'a}n-S{\'a}enz, and Karthikeyan~Natesan Ramamurthy.
\newblock Higher-order attention networks.
\newblock \emph{arXiv preprint arXiv:2206.00606}, 2\penalty0 (3):\penalty0 4,
  2022.

\bibitem[Hajij et~al.(2024)Hajij, Papillon, Frantzen, Agerberg, AlJabea,
  Ballester, Battiloro, Bern{\'a}rdez, Birdal, Brent, et~al.]{hajij2024topox}
Mustafa Hajij, Mathilde Papillon, Florian Frantzen, Jens Agerberg, Ibrahem
  AlJabea, Ruben Ballester, Claudio Battiloro, Guillermo Bern{\'a}rdez, Tolga
  Birdal, Aiden Brent, et~al.
\newblock Topox: a suite of python packages for machine learning on topological
  domains.
\newblock \emph{Journal of Machine Learning Research}, 25\penalty0
  (374):\penalty0 1--8, 2024.

\bibitem[Hamilton et~al.(2017{\natexlab{a}})Hamilton, Ying, and
  Leskovec]{hamilton2017inductive}
Will Hamilton, Zhitao Ying, and Jure Leskovec.
\newblock Inductive representation learning on large graphs.
\newblock \emph{Advances in neural information processing systems}, 30,
  2017{\natexlab{a}}.

\bibitem[Hamilton et~al.(2017{\natexlab{b}})Hamilton, Ying, and
  Leskovec]{Hamilton:2017tp}
William~L. Hamilton, Zhitao Ying, and Jure Leskovec.
\newblock {Inductive Representation Learning on Large Graphs}.
\newblock In \emph{NIPS}, pages 1024--1034, 2017{\natexlab{b}}.

\bibitem[Huang and Yang(2021)]{huang2021unignn}
Jing Huang and Jie Yang.
\newblock Unignn: a unified framework for graph and hypergraph neural networks.
\newblock \emph{arXiv preprint arXiv:2105.00956}, 2021.

\bibitem[Kim et~al.(2024)Kim, Lee, Gao, Antelmi, Polato, and
  Shin]{kim2024survey}
Sunwoo Kim, Soo~Yong Lee, Yue Gao, Alessia Antelmi, Mirko Polato, and Kijung
  Shin.
\newblock A survey on hypergraph neural networks: An in-depth and step-by-step
  guide.
\newblock In \emph{Proceedings of the 30th ACM SIGKDD Conference on Knowledge
  Discovery and Data Mining}, pages 6534--6544, 2024.

\bibitem[Kipf and Welling(2016)]{kipf2016semi}
Thomas~N Kipf and Max Welling.
\newblock Semi-supervised classification with graph convolutional networks.
\newblock \emph{arXiv preprint arXiv:1609.02907}, 2016.

\bibitem[Klamt et~al.(2009)Klamt, Haus, and Theis]{klamt2009hypergraphs}
Steffen Klamt, Utz-Uwe Haus, and Fabian Theis.
\newblock Hypergraphs and cellular networks.
\newblock \emph{PLoS computational biology}, 5\penalty0 (5):\penalty0 e1000385,
  2009.

\bibitem[Kreuzer et~al.(2021)Kreuzer, Beaini, Hamilton, L{\'e}tourneau, and
  Tossou]{kreuzer2021rethinking}
Devin Kreuzer, Dominique Beaini, Will Hamilton, Vincent L{\'e}tourneau, and
  Prudencio Tossou.
\newblock Rethinking graph transformers with spectral attention.
\newblock \emph{Advances in Neural Information Processing Systems},
  34:\penalty0 21618--21629, 2021.

\bibitem[Leal et~al.(2021)Leal, Restrepo, Stadler, and Jost]{leal2021forman}
Wilmer Leal, Guillermo Restrepo, Peter~F Stadler, and J{\"u}rgen Jost.
\newblock Forman--ricci curvature for hypergraphs.
\newblock \emph{Advances in Complex Systems}, 24\penalty0 (01):\penalty0
  2150003, 2021.

\bibitem[Liu et~al.(2024)Liu, Tang, Ye, Dong, Chen, and
  Wang]{liu2024hypergraph}
Zexi Liu, Bohan Tang, Ziyuan Ye, Xiaowen Dong, Siheng Chen, and Yanfeng Wang.
\newblock Hypergraph transformer for semi-supervised classification.
\newblock In \emph{ICASSP 2024-2024 IEEE International Conference on Acoustics,
  Speech and Signal Processing (ICASSP)}, pages 7515--7519. IEEE, 2024.

\bibitem[Luo et~al.(2023)Luo, Mao, Tenenbaum, and
  Kaelbling]{luo2023expressiveness}
Zhezheng Luo, Jiayuan Mao, Joshua~B Tenenbaum, and Leslie~Pack Kaelbling.
\newblock On the expressiveness and generalization of hypergraph neural
  networks.
\newblock \emph{arXiv preprint arXiv:2303.05490}, 2023.

\bibitem[Mulas et~al.(2022)Mulas, Kuehn, B{\"o}hle, and Jost]{mulas2022random}
Raffaella Mulas, Christian Kuehn, Tobias B{\"o}hle, and J{\"u}rgen Jost.
\newblock Random walks and laplacians on hypergraphs: When do they match?
\newblock \emph{Discrete Applied Mathematics}, 317:\penalty0 26--41, 2022.

\bibitem[Ollivier(2007)]{ollivier2007ricci}
Yann Ollivier.
\newblock Ricci curvature of metric spaces.
\newblock \emph{Comptes Rendus Mathematique}, 345\penalty0 (11):\penalty0
  643--646, 2007.

\bibitem[Papillon et~al.(2023)Papillon, Sanborn, Hajij, and
  Miolane]{papillon2023architectures}
Mathilde Papillon, Sophia Sanborn, Mustafa Hajij, and Nina Miolane.
\newblock Architectures of topological deep learning: A survey on topological
  neural networks.
\newblock \emph{Arxiv. Submitted to Transactions on Pattern Analysis and
  Machine Intelligence}, 2023.

\bibitem[Praggastis et~al.(2023)Praggastis, Aksoy, Arendt, Bonicillo, Joslyn,
  Purvine, Shapiro, and Yun]{praggastis2023hypernetx}
Brenda Praggastis, Sinan Aksoy, Dustin Arendt, Mark Bonicillo, Cliff Joslyn,
  Emilie Purvine, Madelyn Shapiro, and Ji~Young Yun.
\newblock Hypernetx: A python package for modeling complex network data as
  hypergraphs.
\newblock \emph{arXiv preprint arXiv:2310.11626}, 2023.

\bibitem[Ramp{\'a}{\v{s}}ek et~al.(2022)Ramp{\'a}{\v{s}}ek, Galkin, Dwivedi,
  Luu, Wolf, and Beaini]{rampavsek2022recipe}
Ladislav Ramp{\'a}{\v{s}}ek, Michael Galkin, Vijay~Prakash Dwivedi, Anh~Tuan
  Luu, Guy Wolf, and Dominique Beaini.
\newblock Recipe for a general, powerful, scalable graph transformer.
\newblock \emph{Advances in Neural Information Processing Systems},
  35:\penalty0 14501--14515, 2022.

\bibitem[Rossi and Ahmed(2015)]{rossi2015network}
Ryan Rossi and Nesreen Ahmed.
\newblock The network data repository with interactive graph analytics and
  visualization.
\newblock In \emph{Proceedings of the AAAI conference on artificial
  intelligence}, volume~29, 2015.

\bibitem[Saucan and Weber(2019)]{saucan2019forman}
Emil Saucan and Melanie Weber.
\newblock Forman’s ricci curvature-from networks to hypernetworks.
\newblock In \emph{Complex Networks and Their Applications VII: Volume 1
  Proceedings The 7th International Conference on Complex Networks and Their
  Applications COMPLEX NETWORKS 2018 7}, pages 706--717. Springer, 2019.

\bibitem[Schaub et~al.(2021)Schaub, Zhu, Seby, Roddenberry, and
  Segarra]{schaub2021signal}
Michael~T Schaub, Yu~Zhu, Jean-Baptiste Seby, T~Mitchell Roddenberry, and
  Santiago Segarra.
\newblock Signal processing on higher-order networks: Livin’on the edge...
  and beyond.
\newblock \emph{Signal Processing}, 187:\penalty0 108149, 2021.

\bibitem[Sen et~al.(2008)Sen, Namata, Bilgic, Getoor, Galligher, and
  Eliassi-Rad]{sen2008collective}
Prithviraj Sen, Galileo Namata, Mustafa Bilgic, Lise Getoor, Brian Galligher,
  and Tina Eliassi-Rad.
\newblock Collective classification in network data.
\newblock \emph{AI magazine}, 29\penalty0 (3):\penalty0 93--93, 2008.

\bibitem[Southern et~al.(2023)Southern, Wayland, Bronstein, and
  Rieck]{southern2023expressive}
Joshua Southern, Jeremy Wayland, Michael~M Bronstein, and Bastian Rieck.
\newblock On the expressive power of ollivier-ricci curvature on graphs.
\newblock 2023.

\bibitem[Sun et~al.(2008)Sun, Ji, and Ye]{sun2008hypergraph}
Liang Sun, Shuiwang Ji, and Jieping Ye.
\newblock Hypergraph spectral learning for multi-label classification.
\newblock In \emph{Proceedings of the 14th ACM SIGKDD international conference
  on Knowledge discovery and data mining}, pages 668--676, 2008.

\bibitem[Telyatnikov et~al.(2024)Telyatnikov, Bernardez, Montagna, Vasylenko,
  Zamzmi, Hajij, Schaub, Miolane, Scardapane, and
  Papamarkou]{telyatnikov2024topobenchmark}
Lev Telyatnikov, Guillermo Bernardez, Marco Montagna, Pavlo Vasylenko, Ghada
  Zamzmi, Mustafa Hajij, Michael~T Schaub, Nina Miolane, Simone Scardapane, and
  Theodore Papamarkou.
\newblock Topobenchmark: A framework for benchmarking topological deep
  learning.
\newblock 2024.
\newblock URL \url{https://arxiv.org/abs/2406.06642}.

\bibitem[Veli{\v{c}}kovi{\'c} et~al.(2017)Veli{\v{c}}kovi{\'c}, Cucurull,
  Casanova, Romero, Lio, and Bengio]{velivckovic2017graph}
Petar Veli{\v{c}}kovi{\'c}, Guillem Cucurull, Arantxa Casanova, Adriana Romero,
  Pietro Lio, and Yoshua Bengio.
\newblock Graph attention networks.
\newblock \emph{arXiv preprint arXiv:1710.10903}, 2017.

\bibitem[Wang and Zhang(2023)]{wang2023towards}
Yanbo Wang and Muhan Zhang.
\newblock Towards better evaluation of gnn expressiveness with brec dataset.
\newblock \emph{arXiv preprint arXiv:2304.07702}, 2023.

\bibitem[Wang and Zhang(2024)]{wang2024empirical}
Yanbo Wang and Muhan Zhang.
\newblock An empirical study of realized gnn expressiveness.
\newblock In \emph{Forty-first International Conference on Machine Learning},
  2024.

\bibitem[Weisfeiler and Leman(1968)]{weisfeiler1968reduction}
Boris Weisfeiler and Andrei Leman.
\newblock The reduction of a graph to canonical form and the algebra which
  appears therein.
\newblock \emph{nti, Series}, 2\penalty0 (9):\penalty0 12--16, 1968.

\bibitem[Xu et~al.(2018)Xu, Hu, Leskovec, and Jegelka]{xu2018powerful}
Keyulu Xu, Weihua Hu, Jure Leskovec, and Stefanie Jegelka.
\newblock How powerful are graph neural networks?
\newblock \emph{arXiv preprint arXiv:1810.00826}, 2018.

\bibitem[Yadati et~al.(2019)Yadati, Nimishakavi, Yadav, Nitin, Louis, and
  Talukdar]{yadati2019hypergcn}
Naganand Yadati, Madhav Nimishakavi, Prateek Yadav, Vikram Nitin, Anand Louis,
  and Partha Talukdar.
\newblock Hypergcn: A new method for training graph convolutional networks on
  hypergraphs.
\newblock \emph{Advances in neural information processing systems}, 32, 2019.

\bibitem[Yanardag and Vishwanathan(2015)]{yanardag2015deep}
Pinar Yanardag and SVN Vishwanathan.
\newblock Deep graph kernels.
\newblock In \emph{Proceedings of the 21th ACM SIGKDD international conference
  on knowledge discovery and data mining}, pages 1365--1374, 2015.

\bibitem[Zhao et~al.(2021)Zhao, Jin, Akoglu, and Shah]{zhao2021stars}
Lingxiao Zhao, Wei Jin, Leman Akoglu, and Neil Shah.
\newblock From stars to subgraphs: Uplifting any gnn with local structure
  awareness.
\newblock \emph{arXiv preprint arXiv:2110.03753}, 2021.

\bibitem[Zhou et~al.(2006)Zhou, Huang, and Sch{\"o}lkopf]{zhou2006learning}
Dengyong Zhou, Jiayuan Huang, and Bernhard Sch{\"o}lkopf.
\newblock Learning with hypergraphs: Clustering, classification, and embedding.
\newblock \emph{Advances in neural information processing systems}, 19, 2006.

\end{thebibliography}

\newpage
\appendix


\section{Extended Background}

\subsection{Hypergraph Expansions}\label{appendix-hgtog}
There exist several expansion techniques for reparametrizing hypergraphs as graphs. Here, we focus on clique expansion, which we empirically found to be the best performing expansion. For more details see, e.g.,~\citep{sun2008hypergraph}.

To reparametrize a hypergraph $H=(V,E_H)$ as a graph via \emph{clique expansion}, we define $G=(V, E_G)$ where $E_G=\{\{u,v\}| \{u,v\} \subseteq e, e \in E_H \} $. An example is given in Fig.~\ref{fig:expansion-hg-to-g}.
\begin{figure}[H]
  \centering
  \includegraphics[width=0.7\textwidth]{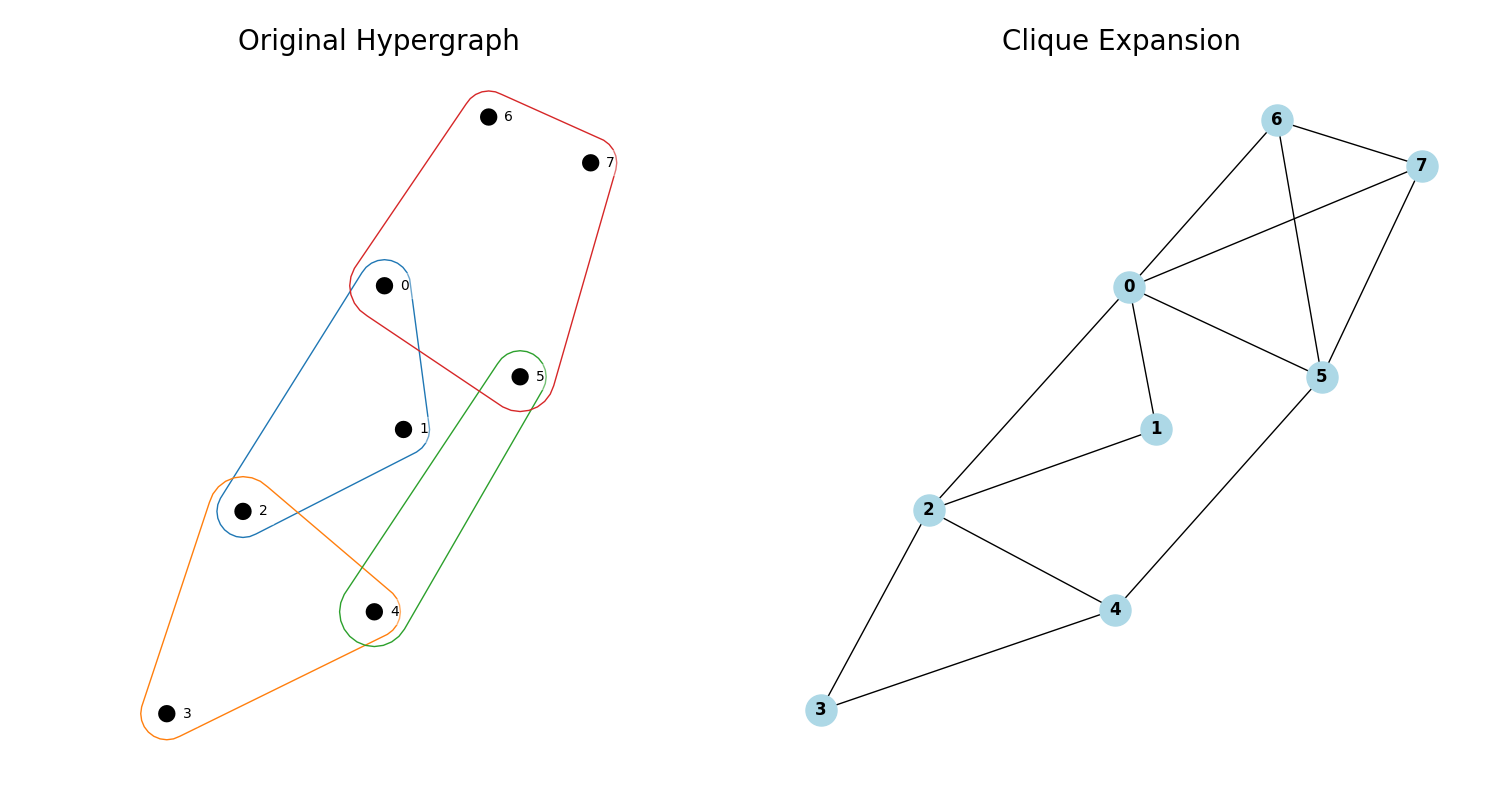}
  \caption{Example of a clique expansion of a hypergraph to a graph. The plots are created using NetworkX \citep{hagberg2008exploring} and HyperNetX \citep{praggastis2023hypernetx}.}
  \label{fig:expansion-hg-to-g}
\end{figure}

\subsection{Lifting graphs to hypergraphs}\label{appendix-gtohg}
The term ``lifting'' refers generally to the reparametrization of one topological domain to another, usually one that captures richer higher-order information. In our setting we lift graphs to hypergraphs by adding hyperedges to groups of nodes that are pairwise interconnected. 
An example of a lift of a graph to a hypergraph is shown in Fig.~\ref{fig:lifting-g-to-hg}.

\begin{figure}[H]
  \centering
  \includegraphics[width=0.7\textwidth]{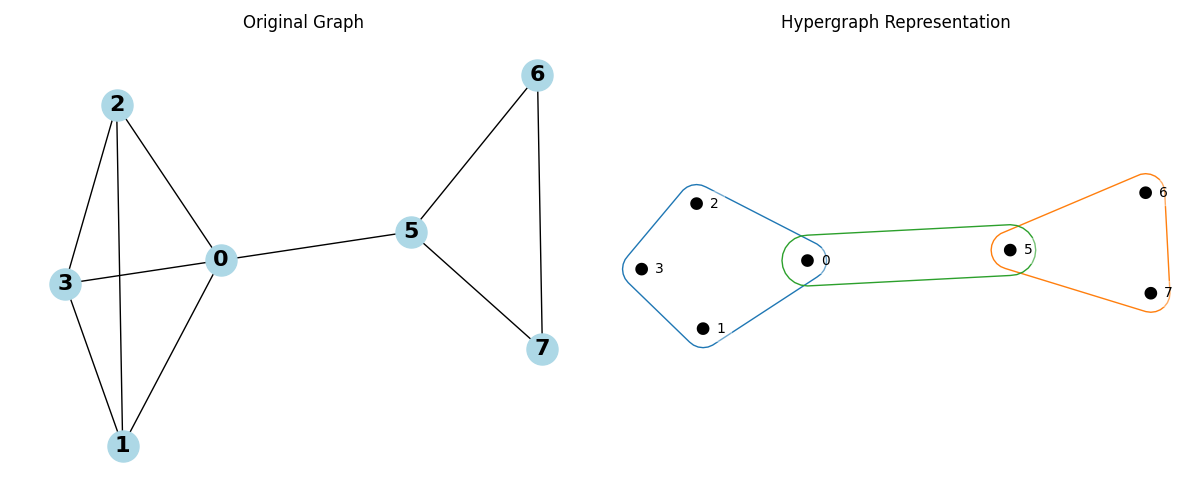}
  \caption{Lifting of a graph to a hypergraph.}
  \label{fig:lifting-g-to-hg}
\end{figure}

\subsection{Weighted-Edges (WE) Hypergraph Random walks}

We define Weighted-Edges Random Walks (WE), which induce the following measure
\begin{equation}\label{measures-we}
\mu_i^{\text{WE}}(j) = \begin{cases} 
\mathbb{P}^{\text{WE}}(i \rightarrow j) \text{,} \text{ if } j 
\in \mathcal{N}_i \\
0 \text{ otherwise} 
\end{cases} \; ,
\end{equation}
where $\mathcal{N}_i$ are the neighbors of $i$ and transition probabilities are given by
\begin{equation}
\mathbb{P}^{\text{WE}}(i \rightarrow j)  = \frac{1}{\sum_{ \{f|i \in f \}}(|f|-1)} \sum_{\{e | \{i,j\} \subseteq e\}} 1 \; .
\end{equation}

The probability of picking a hyperedge is directly proportional to the number of nodes in the hyperedge minus 1.

\subsection{Laplacians}\label{Laplacians}
Several notions of Laplacians have been studied on hypergraphs. In this work, we consider two types of Laplacians on graphs for implementing H-LAPE, the Hodge-Laplacian, with we defined in the main text, and the normalized Laplacian, which we discuss below. Additionally, we comment on random walks hypergraphs Laplacians. However, since they need not be symmetric, there are not suitable for use in H-LAPE. Nonetheless, their spectrum provides an additional means for defining structural encodings. 

\subsubsection{Normalized graph and hypergraph Laplacian}\label{laplacians-appendix}

\noindent For graphs, the \emph{symmetrically normalized graph Laplacian} is defined as
\begin{equation} 
I-D_v^{-1/2}AD_v^{-1/2} =  D_v^{-1/2}LD_v^{-1/2} \; ,
\end{equation}
where $L = D_v - A$ is the standard graph Laplacian.

The \emph{normalized hypergraph Laplacian} \citep{zhou2006learning, feng2019hypergraph} is defined as

\begin{equation} \label{normalizedLaplacian} 
\Delta = I - D_v^{-1/2}B_1D_e^{-1}B_1^TD_v^{-1/2} = D_v^{-1/2}(D_v -B_1D_e^{-1}B_1^T)D_v^{-1/2} \; ,
\end{equation}
where $D_v$ and $D_e$ are the diagonal node and edge degree matrices. The Dirichlet energy $E(f)$ of a scalar function on a hypergraph is defined as
\begin{equation} 
E(f) = \frac{1}{2}\sum_{e\in E} \sum_{\{u,v\}\subseteq e} \frac{1}{|e|} \left( \frac{f(u)}{\sqrt{d(u)}} - \frac{f(v)}{\sqrt{d(v)}} \right)^2 \; .
\end{equation}
The normalized hypergraph Laplacian satisfies
\begin{equation} 
E(f) = f^T\Delta f \; ,
\end{equation}
which establishes that the normalized hypergraph Laplacian is positive semi-definite~\citep{zhou2006learning}. The smallest eigenvalue of $\Delta$ is $0$.

\subsubsection{Random walks hypergraphs Laplacians}

For a graph, the \textit{random walk Laplacian} is defined as $L=I-D^{-1}A$, where, as usual, $D$ denotes the degree matrix and $A$ the adjency matrix. The probability of a random walk transitioning from node $i$ to $j$ is given by $-L_{ij}=\frac{A_{ij}}{d_i}$. \citet{mulas2022random}~introduce a generalized random-walk Laplacians on hypergraphs: For any random walk on a hypergraph, they define in analogy to the graph case
\begin{equation} 
L_{ij} = \begin{cases} 1 \text{ if } i=j \\
 - \mathbb{P}(i \rightarrow j)\end{cases} \; .
 \end{equation}
This random walk notion is equivalent to the EE scheme in~\citet{coupette2022ollivier}, defined in the main text: Starting at $v$, choose one of the hyperedges containing $v$ with equal probability, then select any of the vertices of the chosen hyperedge (other than $v$) with equal probability. Formally, we write \begin{equation} \label{laplacian} \mathbb{P}(i \rightarrow j) = \frac{\mathcal{A}_{ij}}{\mathcal{D}_{ii}} \; . \end{equation}

A similar notion was previously studied in~\citep{banerjee2021spectrum}. 

Note that the random-walk Laplacian need not be symmetric. As a result, it is not suitable for defining H-LAPE. However, in some recent works, the spectrum of the graph Laplacian, rather than its eigenvectors, have been used as SE~\citep{kreuzer2021rethinking}. An analogous notion can be defined at the hypergraph level, which we term \emph{Hypergraph Laplacian Structural Encoding (H-LASE)}. We analyze the expressivity of such SEs, establishing that they a provably more expressive than the 1-WL test/ MPGNNs.

\begin{theorem}\label{thm:lape_exp} \textbf{(H-LASE Expressivity).} MPGNNs with H-LASE are strictly more expressive than the 1-WL test and hence than MPGNNs without encodings. Further, there exist graphs, which can be distinguished using H-LASE, but not using standard, graph-level LASE. 
\end{theorem}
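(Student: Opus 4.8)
The plan is to instantiate H-LASE as the canonically ordered multiset of eigenvalues of the random-walk hypergraph Laplacian $L=I-P$, where $P_{ij}=\mathbb{P}(i\to j)$ is the EE transition operator of Eq.~\ref{laplacian} (Def.~\ref{def:hrw}), computed on the clique-lift and appended to the node features. Because this operator need not be symmetric, its eigenvalues may be complex and must be sorted by a fixed rule to define a valid encoding, but this is a bookkeeping point only. The direction ``at least as expressive as $1$-WL'' is immediate: an MPGNN can ignore the appended spectrum, so an MPGNN with H-LASE dominates a plain MPGNN, which matches $1$-WL by~\citep{xu2018powerful}. Both strictness claims then reduce to producing witness pairs on which the weaker test fails but the two \emph{lifted} spectra differ.

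For strictness over $1$-WL I only need a non-isomorphic, $1$-WL-indistinguishable pair whose lifts have distinct random-walk Laplacian spectra. A safe witness is the triangular prism against $K_{3,3}$: both are $3$-regular, hence share the stable $1$-WL colouring, yet they are not cospectral and $K_{3,3}$ is triangle-free, so its lift reduces to the ordinary graph random-walk Laplacian while the prism's lift additionally carries two triangular hyperedges; a one-line computation confirms the spectra differ. (Following Thms.~\ref{thm:hcp_exp} and~\ref{thm:ldp_exp} one could instead attempt BREC Pair~0, but must then check its lift is not spectrally degenerate.) This establishes that MPGNNs with H-LASE are strictly more expressive than $1$-WL and than encoding-free MPGNNs.

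The separation from graph-level LASE is the delicate part, because the witness must now be simultaneously $1$-WL-indistinguishable \emph{and} graph-Laplacian-cospectral --- so that the graph-level spectrum feature coincides --- while remaining H-LASE-distinguishable. The usual cospectral workhorse, the Rook and Shrikhande graphs of Thms.~\ref{thm:hcp_exp}--\ref{thm:ldp_exp}, \textbf{provably fails here}: in the maximal-clique lift of either graph the vertex clique-degrees and the per-edge shared-clique weights conspire to give $P_{ij}=\tfrac16$ on every edge, i.e.\ $P=\tfrac16 A$ in \emph{both} cases, so $L=I-\tfrac16 A$ simply inherits their adjacency-cospectrality and H-LASE coincides. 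One must therefore leave the regime of equal-parameter strongly regular graphs and use a cospectral pair whose maximal-clique distribution genuinely differs (equal triangle count $\tfrac16\mathrm{tr}(A^3)$, but different $K_4$/maximal-clique profile), so that the lift's transition operator is not a scalar multiple of a cospectral adjacency matrix and the eigenvalues of $L$ separate.

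The main obstacle is precisely the construction of this last witness: cospectrality forces a great many invariants to agree, and the Rook--Shrikhande collapse shows that the ``free'' strongly regular candidates are useless, so it is not a priori evident that \emph{any} hypergraph-spectral quantity distinguishes a cospectral pair. I would resolve this by searching small cospectral, $1$-WL-indistinguishable (regular but not strongly regular) graphs with differing maximal-clique profiles, and then certify the separation by exhibiting a concrete difference between the two lifted characteristic polynomials --- a finite computation deferred to an appendix in the style of Apx.~\ref{appendix-rook-shrikhande} and~\ref{appendix-pair-0}.
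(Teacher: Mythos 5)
Your analysis of the Rook--Shrikhande pair under the EE random-walk instantiation is correct, and it is a nice observation: on the Rook lift every vertex lies in two 4-cliques and every adjacent pair shares exactly one, so $P_{ij}=\tfrac12\cdot\tfrac13=\tfrac16$, while on the Shrikhande lift every vertex lies in six triangles and every edge in two, so $P_{ij}=\tfrac16\cdot 2\cdot\tfrac12=\tfrac16$; since $A_R$ and $A_S$ are cospectral, $I-\tfrac16 A$ is cospectral in both cases and this instantiation of H-LASE cannot separate them. Your prism-vs-$K_{3,3}$ witness also suffices for the first claim (strictness over 1-WL). However, the second claim --- graphs distinguished by H-LASE but not by graph-level LASE --- is exactly where your proof stops: you defer it to an unperformed search for a cospectral, 1-WL-indistinguishable pair with differing lifted spectra, and you yourself note it is not a priori clear such a pair exists for your operator. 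Since that separation is the substantive content of the theorem, this is a genuine gap, not a bookkeeping issue.

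The missing idea is that the collapse you proved is an artifact of your choice of Laplacian, not of the witness pair. The paper instantiates H-LASE with the \emph{normalized} hypergraph Laplacian of Eq.~\ref{normalizedLaplacian}, and then the discarded pair works, witnessing both claims at once. At the graph level, Rook and Shrikhande are 6-regular and adjacency-cospectral, so the normalized, random-walk, and Hodge Laplacians are all cospectral and graph-level LASE (as well as 1-WL) fails. On the lifts, however, $B_1D_e^{-1}B_1^T=\tfrac14\left(2I+A_R\right)$ for the Rook hypergraph ($D_v=2I$, $D_e=4I$, each edge in one 4-clique) versus $\tfrac13\left(6I+2A_S\right)$ for the Shrikhande hypergraph ($D_v=6I$, $D_e=3I$, each edge in two triangles), giving $\Delta_R=\tfrac34 I-\tfrac18 A_R$ and $\Delta_S=\tfrac23 I-\tfrac19 A_S$. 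Both are affine in cospectral matrices, but with \emph{different} coefficients, because $D_e^{-1}$ and $D_v^{-1/2}$ are sensitive to hyperedge size and vertex hyperdegree in a way your transition operator is not: the spectra are $\{0,\tfrac12,1\}$ with multiplicities $(1,6,9)$ versus $\{0,\tfrac49,\tfrac89\}$ with multiplicities $(1,6,9)$, which differ. Your negative result is worth recording --- it shows the EE random-walk spectrum is strictly weaker on this family and that the choice of hypergraph Laplacian matters --- but to prove the theorem you should either switch to the normalized Laplacian, where the explicit witness above closes the argument, or actually exhibit the cospectral pair your search strategy presupposes.
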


\textit{Proof.} Consider the 4 by 4 Rook and Shrikhande graphs: the two graphs are isospectral using the Normalized, Random Walk and Hodge Laplacians. But the two graphs's liftings to hypergraphs are not isospectral for the Normalized Laplacian. $\square$. 

\subsection{Discrete Curvature}\label{apx:curvature}
\paragraph{Forman's curvature}
\citet{Forman2003BochnersMF} proposed a curvature definition on CW complexes, which derives from a fundamental relation between Ricci curvature and Laplacians (Bochner-Weizenb{\"o}ck identity). For a simple, undirected, and unweighted graph $G = (V, E)$, the Forman-Ricci curvature (FRC) of an edge $e = (u, v) \in E$ is given by:
\begin{equation*}
    \mathcal{FR} (u,v) = 4 - \deg(u) - \deg(v)
\end{equation*}
The edge-based Forman curvature definition can be extended to capture curvature contributions from higher-order structures. Incorporating cycle counts up to order $k$ (denoted as $\mathcal{AF}_k$) has been shown to enrich the utility of the notion. Setting $k=3$ and $k=4$, the \emph{Augmented Forman-Ricci curvature} is given by
\begin{equation*}
\begin{split}
    \mathcal{AF}_3 (u,v) &= 4 - \deg(u) - \deg(v) + 3 \triangle(u,v)\\
    \mathcal{AF}_4 (u,v) &= 4 - \deg(u) - \deg(v) + 3 \triangle(u,v) + 2 \square(u,v) \; ,
\end{split}
\end{equation*}
where $\triangle(u,v)$ and $\square(u,v)$ represent the number of triangles and quadrangles containing the edge $(u,v)$. Prior work in the graph machine learning literature has demonstrated the effectiveness of these notions, e.g.,~\citep{fesser2024mitigating,fesser2024augmentations}. To the best of our knowledge, characterizations of such higher-order information via hypergraph curvatures have not been previously studied in this literature.

\paragraph{Ollivier's curvature}
We also consider the Ollivier-Ricci Curvature (ORC), a notion of curvature on metric spaces equipped with a probability measure \citep{ollivier2007ricci}. On graphs endowed with the shortest path distance $d(\cdot,\cdot)$, the ORC of an edge $\{i,j\}$ is defined as
\begin{equation} 
\kappa(i,j) = 1 - \frac{W_1(\mu_i, \mu_j)}{d(i,j)} \; ,
\end{equation}
where $W_1$ denotes the Wasserstein distance. Recall that, in general, $W_1(\cdot,\cdot)$ between two probability distributions $\mu_1, \mu_2$ is defined as
\begin{equation}
\label{eq:W-dist}
    W_1(\mu_1, \mu_2) = \inf_{\mu \in \Gamma(\mu_1,\mu_2)} \int d(x,y) \mu(x,y) \; dx \; dy \; ,
\end{equation}
where $\Gamma(\mu_1,\mu_2)$ is the set of measures with marginals $\mu_1,\mu_2$. In our case, the measures are defined by a uniform distribution over the 1-hop neighborhoods of the nodes $i$ and $j$.

\begin{rmk}\label{moregeneral-orc-graphs} \textbf{(ORC in a general setting).} As noted in \citep{southern2023expressive}, the ORC can be defined in a more general setting on graphs, where the metric $d$ does not have to be the shortest-path distance. Furthermore, the probability measures need not be uniform probability measures in the 1-hop neighborhood of the node. This is shown to be beneficial in distinguishing 3-WL indistinguishable graphs using the ORC computed with respect to measures induced by $m$-hop random walks where $m>1$.
\end{rmk}
\newpage 
\section{Architectures}

\subsection{GNN architectures}\label{appendix-gnn-architectures}

\textbf{GCN} extends convolutional neural networks to graph-structured data. It derives a shared representation by integrating node features and graph connectivity through message-passing. Mathematically, a GCN layer is expressed as  
\[
X^{l+1} = \sigma \left( \tilde{D}^{-1/2} \tilde{A} \tilde{D}^{-1/2} X^{l} W^{l} \right) \; ,
\]
where \(W^{l}\) is the learnable weight matrix at layer \(l\), and  
$\tilde{D}^{-1/2} \tilde{A} \tilde{D}^{-1/2}$ is the normalized adjacency matrix of the original graph with added self-loops. This graph has adjacency matrix $\tilde{A}=A+I_N$ and node degree matrix $\tilde{D}$. The activation function \(\sigma\) is typically chosen as ReLU or a sigmoid function.

\textbf{GIN} is a message-passing graph neural network (MPGNN) designed for maximum expressiveness, meaning it can learn a broader range of structural patterns compared to other MPGNNs like GCN. GIN is inspired by the Weisfeiler-Lehman (WL) graph isomorphism test. Formally, the GIN layer is given by  
\begin{equation}
x_i^{l+1} = \text{MLP}^{l} \left((1+\epsilon)\cdot x_i^{l}+ 
\sum_{j \in \mathcal{N}_i} x_j^{l} \right)
\end{equation}
where \(x_i^{l}\) denotes the feature of node \(i\) at layer \(l\), \(\mathcal{N}_i\) represents the neighbors of node \(i\), and \(\epsilon\) is a learnable parameter. The update step is carried out using a multi-layer perceptron \(\text{MLP}(\cdot)\), which is a fully connected neural network.

\textbf{GraphGPS} is a hybrid graph transformer (GT) model that integrates MPGNNs with transformer layers to effectively capture both local and global patterns in graph learning. It enhances standard GNNs by incorporating positional encodings (which provide node location information) and structural encodings (which capture graph-theoretic properties of nodes). By alternating between GNN layers (for local aggregation) and transformer layers (for global attention), GraphGPS can efficiently model both short-range and long-range dependencies in graphs. It employs multi-head attention, residual connections, and layer normalization to maintain stability and improve learning performance. Mathematically, GraphGPS updates the node and edge features as follows:

$$X^{l+1},E^{l+1}=\text{GPS}^l(X^l, E^l, A)$$
computed as:
$$X^{l+1}_M,E^{l+1}=\text{MPNN}^l_e(X^l, E^l, A)$$
$$X^{l+1}_T=\text{GlobalAttn}^l(X^l)$$
$$X^{l+1}=\text{MLP}(X_M^{l+1}+X_T^{l+1})$$

where $\text{MLP}(\cdot)$ is a 2-layer Multi-Layer Perceptron (MLP) block. Note that we omit the batch normalization in this exposition.

\subsection{HNN architectures}\label{hnn-architectures}
Models on the hypergaph-level domain are approaches that preserve the hypergraph structure during learning \citep{kim2024survey}.
\citet{huang2021unignn} proposes UniGCN, UniGIN, UniGAT, UniSAGE and UniGCNII, which directly generalize the classic  GCN, GIN, GAT \citep{velivckovic2017graph},and GraphSAGE \citep{hamilton2017inductive} and GNCII \citep{chen2020simple}.\\

\subsubsection{UniGCN}
UniGCN follows the two-phase scheme \ref{two-phase-scheme} and sets the second aggregation function $\phi_2$ to be
\begin{equation}
\tilde{x}_i^{l+1} = \frac{1}{\sqrt{d_i+1}} \sum_{e \in \tilde{E}_i} \frac{1}{\sqrt{\overline{d}_e}} W^l h_e^{l+1} \; ,
\end{equation}
where $\overline{d}_e=\frac{1}{|e|}\sum_{i\in e} (d_i+1)$ is the average degree of an hyperedge (after adding self-loops to the original hypergraph), and where $\tilde{\mathcal{N}}_i$ and $\tilde{E}(i)$ are the neighborhood of vertex $i$ and the incident hyperedges to $i$ after adding self loops. \\

\subsubsection{UniGIN}

UniGIN also follows the two-phase scheme (see Eq.~\ref{two-phase-scheme}) and sets the second aggregation function $\phi_2$ to be

\begin{equation} 
\tilde{x}_i^{l+1} = W^l\left( (1 + \varepsilon)x_i^l + \sum_{e \in E_i} h_e^{l+1} \right) \; .
\end{equation} 

\subsubsection{UniGAT} UniGAT adopts an attention mechanism to assign importance score to each of the center node’s neighbors \citep{huang2021unignn}. The attention mechanism is formulated as
\begin{align}
\alpha_{ie}^{l+1} &= \sigma \left( a^T \left[ W^l h_{\{i\}}^{l+1} ; W^l h_e^{l+1} \right] \right) \\
\tilde{\alpha}_{ie}^{l+1} & = \frac{\exp (\alpha_{ie}^{l+1})}{\sum_{e' \in \tilde{E}_i} \exp (\alpha_{ie'}^{l+1})} \\
\tilde{x}_i^{l+1} & = \sum_{e \in \tilde{E}_i} \tilde{\alpha}_{ie}^{l+1} W^l h_e^{l+1} \; .
\end{align}

\subsubsection{UniSAGE}
UniSAGE follows the two-phase scheme as detailed in Equ.~\ref{two-phase-scheme} and sets the second aggregation function $\phi_2$ to be
\begin{equation} \tilde{x}_i^{l+1} = W^l(x_i^l + \text{AGGREGATE} (\{h_e^{l+1}\}_{e\in E_i}
)) \end{equation}

\subsubsection{UniGCNII}
UniGCNII updates node features using:
\begin{align} \hat{x}_i^{l+1} & = \sqrt{\frac{1}{d_i+1}} \sum_{e \in \tilde{E_i}} \sqrt{\frac{1}{\overline{d}_e}} h_e^{l+1} \\
     \tilde{x}_i^{l+1} & = \left((1 - \beta)I + \beta W^l\right)\left((1 - \alpha)\hat{x}_i^{l+1} + \alpha x^0_i\right) \end{align} 
     where $\alpha$ and $\beta$  are hyperparameters.

\section{Experimental details}

\subsection{Datasets}\label{appendix-datasets}

We consider multiple datasets commonly used for benchmarking in the literature, including social networks, chemical reaction networks, and citation networks. 

\subsubsection{Graph Datasets} 

Collab, Imdb and Reddit are  proposed in \citep{yanardag2015deep}. Collab is a collection of ego-networks where nodes are researchers. The labels correspond to the fields of research of the authors. Imdb is also a collection of ego-networks. Nodes are actors and an edge between two nodes is present if the actors played together. The labels correspond to the genre of movies used to construct the networks. 

Reddit
is a collection of graphs corresponding to online discussion
threads on reddit. Nodes correspond to users, who are connected if they replied to each other comments. The task consists in determining if the community is a discussion-community or a question answering community.

Mutag is a collection of graphs corresponding to nitroaromatic compounds \citep{debnath1991structure}. The goal is to predict their mutagenicity in the Ames test \citep{ames1973carcinogens} using S. typhimurium TA98.

Proteins and Enzymes are introduced in \citep{borgwardt2005protein}. 
These datasets use the 3D structure of the folded proteins to build a graph of amino acids \citep{borgwardt2005protein}.

Peptides is a chemical data set introduced in
\citep{dwivedi2022long}. The graphs are derived from peptides, short chains of amino acid, such that the nodes
correspond to the heavy (non-hydrogen)
while the edges represent the bonds between them. Peptides-func is a graph classification task, with a total of 10 classes
based on the peptide function (Antibacterial, Antiviral, etc). peptides-struct is a graph regression task.

We outline basic characteristics of these datasets in Tab.~\ref{tab:dataset_graph_stats}.

\begin{table}[H]
\tiny
\centering
\begin{tabular}{|c|c|c|c|c|c|c|c|c|}
\hline
 & Collab & Imdb & Reddit & Mutag & Enzymes & Proteins & Peptides-func & Peptides-struct  \\
\hline
\# graphs & 5000  & 1000  & 2000  & 188 & 600 & 1113 & 15,535 & 15,535 \\
\hline
avg. \# node per graph & 74.49 & 19.77 & 425.57 &  17.93 & 31.86 & 37.40  & 150.94 & 150.94  \\
\hline
\# classes &  3 & 2 & 2 & 2 & 6 & 2 & 10 & - \\
\hline
\end{tabular}
\caption{Dataset Statistics for Collab, Imdb, Reddit, Mutag, Enzymes, Proteins and Peptides.}
\label{tab:dataset_graph_stats}
\end{table}

\subsubsection{Hypergraph Datasets} 

We use five datasets that are naturally parametrized as hypergraphs: pubmed, Cora co-authorship (Cora-CA), cora co-citation (Cora-CC), Citeseer \citep{sen2008collective} and DBLP \citep{rossi2015network}. We use the same pre-processed
hypergraphs as in~\citet{yadati2019hypergcn}, which are taken from~\citet{huang2021unignn}. 
The hypergraphs are created with each vertex representing a document. The Cora data set, for example, contains machine learning papers divided into one of seven classes. In a given graph of the co-authorship datasets Cora-CA and DBLP, all documents co-authored by one author form one hyperedge. In pubmed, citeseer and Cora-CC, all documents cited by an author from one hyperedge. We outline basic characteristics of these datasets in Tab.~\ref{tab:dataset_hg_stats}.

\begin{table}[H]
\centering
\footnotesize
\begin{tabular}{|c|c|c|c|c|c|}
\hline
 & Pubmed & Cora-CA & Cora-CC & Citeseer & DBLP \\
\hline
\# hypernodes, $V$ & 19717 & 2708 & 2708 & 3312 &  43413 \\
\hline
\# hyperedges, $E$ & 7963 & 1072 & 1579 & 1079 & 22535 \\
\hline
\# features, $d$ & 500 & 1433 & 1433 & 3703 & 1425 \\
\hline
\# classes, $q$ & 3 & 7 & 7 & 6 & 6\\
\hline
\end{tabular}
\caption{Dataset Statistics}
\label{tab:dataset_hg_stats}
\end{table}

\subsubsection{BREC Dataset for empirical expressivity analysis}\label{apx:data-brec}

The BREC dataset is an expressiveness
dataset containing 1-WL-indistinguishable graphs in 4 categories: \textbf{B}asic, \textbf{R}egular, \textbf{E}xtension, and \textbf{C}FI graphs \citep{wang2024empirical}. The 140 pairs of regular graphs are further sub-categorized into simple regular graphs (50 pairs), strongly
regular graphs (50 pairs), 4-vertex condition graphs (20 pairs) and distance regular graphs (20 pairs). Note that we remove pairs that include non-connected graphs from the original 400 pairs to arrive at a total of 390 pairs. Graphs in the Basic category (60 pairs, of which we remove 4) are non-regular. Some of the CFI graphs are 4-WL-indistinguishable. We provide a plot of the number of nodes and edges in each categories' graphs in Fig.~\ref{fig:brec-degrees} and Fig.~\ref{fig:brec-edge-counts}.

\begin{figure}[H]
  \centering
  \includegraphics[width=0.9\textwidth]{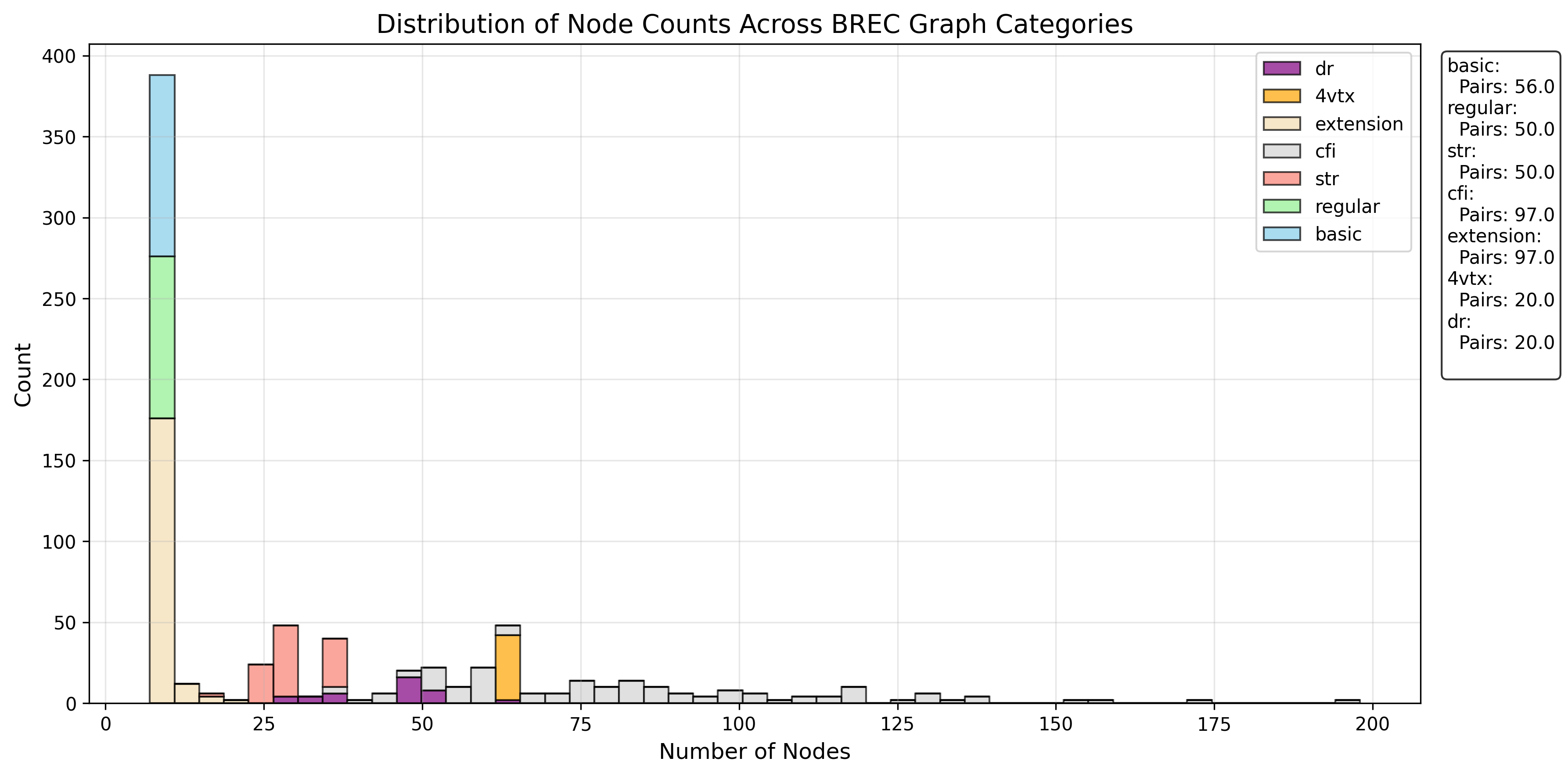}
  \caption{Histogram of the number of nodes in the graphs in BREC. The bars are stacked. We exclude pairs containing non-connected graphs from the original 800 graphs to arrive at 780 graphs. Best seen in color.}
  \label{fig:brec-degrees}
\end{figure}

\begin{figure}[H]
  \centering
  \includegraphics[width=0.9\textwidth]{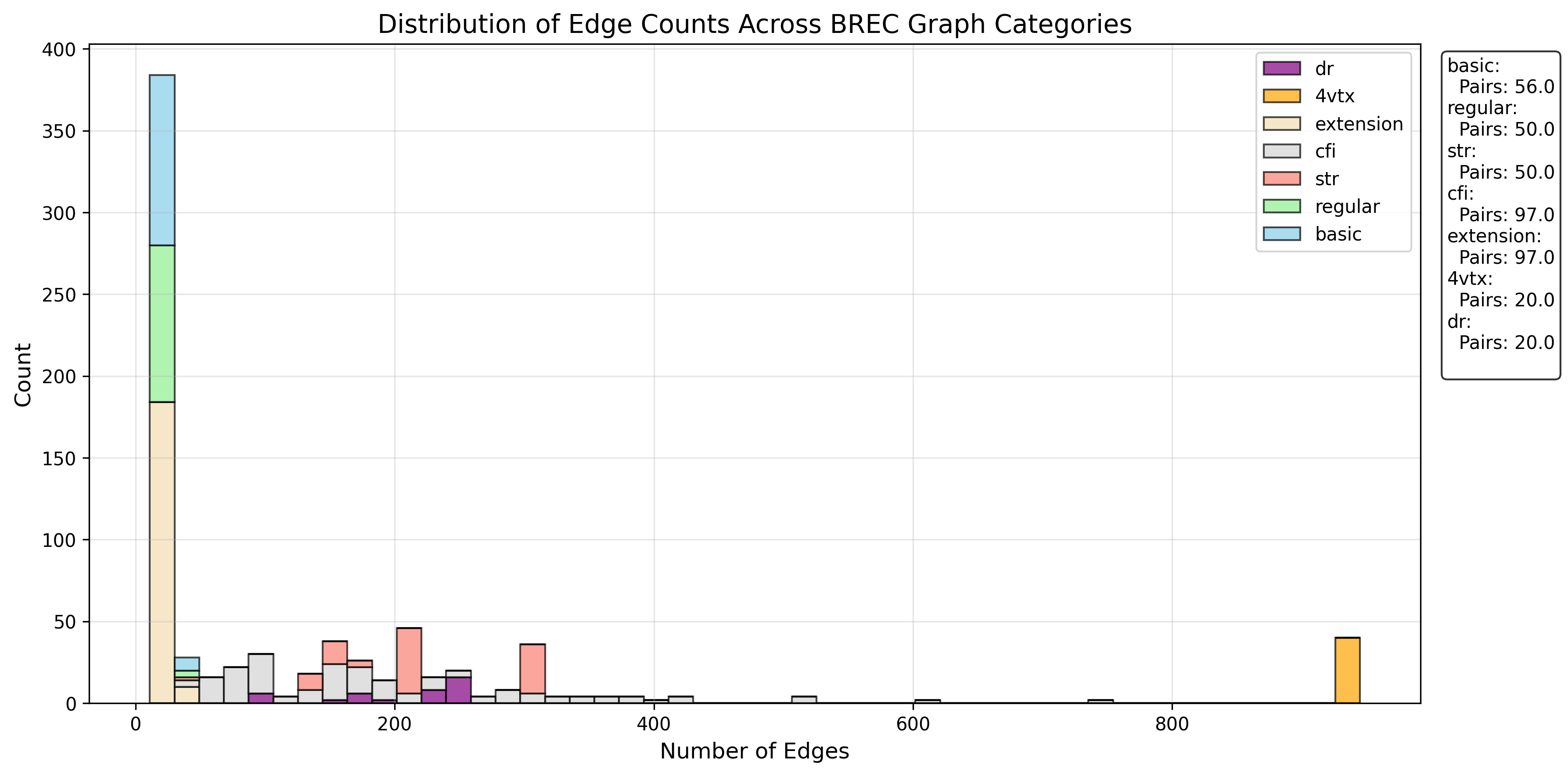}
  \caption{Histogram of the number of edges in the graphs in BREC. The bars are stacked. We exclude pairs containing non-connected graphs from the original 800 graphs to arrive at 780 graphs. Best seen in color.}
  \label{fig:brec-edge-counts}
\end{figure}

\subsection{Hyperparameters}

\textbf{For GNNs}

We outline the hyperparameter used for Tab.~\ref{tab:node}, Tab.~\ref{tab:gcn}, Tab.~\ref{tab:gps} and Tab.~\ref{tab:gin_selected} in Tab.~\ref{tab:node-params}, Tab.~\ref{tab:gcn-params}, Tab.~\ref{tab:gps-params}.

\begin{table}[H]
\footnotesize
\centering
\begin{tabular}{|l|c|c|c|c|c|}
\hline
\textbf{Features} & \textbf{citeseer-CC} & \textbf{Cora-CA} & \textbf{Cora-CC} & \textbf{Pubmed-CC} & \textbf{DBLP}\\
\hline
Num. Layers & 3 & 3 & 3 & 3 & 3 \\
Hidden Dim. & 128 & 128 & 128 & 128 & 128  \\
Learning Rate & 0.001 & 0.001 & 0.001 & 0.001 & 0.001 \\
Dropout & 0.2 & 0.2 & 0.2 & 0.2 & 0.2  \\
Batch Size & 50 & 50 & 50 & 50 & 50 \\
Epochs & 300 & 300 & 300 & 300 & 300  \\
\hline
\end{tabular}
\caption{Hyperparameter settings for Tab. \ref{tab:node}.}\label{tab:node-params}
\end{table}

\begin{table}[H]
\footnotesize
\centering
\begin{tabular}{|l|c|c|c|c|c|c|c|c|}
\hline
\textbf{Features} & \textbf{Collab} & \textbf{Imdb} & \textbf{Reddit} & \textbf{Mutag} & \textbf{Enzymes} & \textbf{Proteins} & \textbf{Peptides-f} & \textbf{Peptides-s} \\
\hline
Num. Layers & 4 & 4 & 4 & 4 & 4 & 4 & 8 & 8 \\
Hidden Dim. & 64 & 64 & 64 & 64 & 64 & 64 & 235 & 235 \\
Learning Rate & 0.001 & 0.001 & 0.001 & 0.001 & 0.001 & 0.001 & 0.001 & 0.001 \\
Dropout & 0.5 & 0.5 & 0.5 & 0.5 & 0.5 & 0.5 & 0.1 & 0.1 \\
Batch Size & 50 & 50 & 50 & 50 & 50 & 50 & 50 & 50\\
Epochs & 300 & 300 & 300 & 300 & 300 & 300 & 300 & 300\\
\hline
\end{tabular}
\caption{Hyperparameter settings for Tab. \ref{tab:gcn}.}\label{tab:gcn-params}
\end{table}

\begin{table}[H]
\footnotesize
\centering
\begin{tabular}{|l|c|c|c|}
\hline
\textbf{Features} & \textbf{Collab} & \textbf{Imdb} & \textbf{Reddit}  \\
\hline
MP-Layer & GIN & GIN & GIN  \\
Num. Layers & 4 & 4 & 4  \\
Hidden Dim. & 64 & 64 & 64  \\
Learning Rate & 0.001 & 0.001 & 0.001  \\
Dropout & 0.2 & 0.2 & 0.2 \\
Batch Size & 50 & 50 & 50 \\
Epochs & 300 & 300 & 300 \\
\hline
\end{tabular}
\caption{Hyperparameter settings for Tab. \ref{tab:gin_selected}.}\label{tab:gps-params}
\end{table}

\section{Additional GNN results}
\label{appendix:additional_results}

We include additional results with graph-level and hypergraph-level encodings on the Mutag dataset with GCN and GPS (Tab.~\ref{tab:mutag}) and on the social networks Collab, Imdb, and Reddit using GIN (Tab.~ \ref{tab:gin_selected}).

\begin{table*}[ht!]
\centering
\footnotesize
\begin{tabular}{|l|c|c|}
\hline
\textbf{Model (Encodings)} & \textbf{GCN} & \textbf{GPS} \\
\hline
No Encoding & $65.96 \pm 1.76$ & $80.40 \pm 1.53$ \\ \hline
LCP-FRC & $67.04 \pm 1.49$ & $83.94 \pm 2.06$ \\
LCP-ORC & $83.09 \pm 1.71$ & $84.93 \pm 1.82$ \\
19-RWPE & $71.75 \pm 2.08$ & $80.13 \pm 1.65$ \\
20-LAPE & $73.30 \pm 1.95$ & $82.27 \pm 1.57$ \\
\hline
HCP-FRC & $80.85 \pm 1.77$ & $\mathbf{89.36 \pm 1.68}$ \\
EE H-19-RWPE & $\mathbf{85.32 \pm 1.63}$ & $88.65 \pm 2.24$ \\
EN H-19-RWPE & $82.34 \pm 2.68$ & $88.49 \pm 2.12$ \\
Hodge H-20-LAPE & $83.66 \pm 1.90$ & $86.72 \pm 1.96$ \\
Norm. H-20-LAPE & $81.68 \pm 1.79$ & $86.90 \pm 1.81$ \\
\hline
\end{tabular}
\caption{GCN and GPS performance on the Mutag dataset with various graph and hypergraph encodings. We report mean and standard deviation across 50 runs.}
\label{tab:mutag}
\end{table*}

\begin{table*}[ht!]
\centering
\footnotesize
\begin{tabular}{|l|c|c|c|}
\hline
\textbf{Model (Encodings)} & \textbf{Collab} ($\uparrow$) & \textbf{Imdb} ($\uparrow$) & \textbf{Reddit} ($\uparrow$) \\
\hline
GIN (No Encoding)          & $67.44 \pm 1.13$ & $67.12 \pm 1.36$ & $75.38 \pm 1.27$ \\ \hline
GIN (LCP-FRC)              & $71.96 \pm 1.30$ & $70.18 \pm 1.44$ & $69.66 \pm 1.62$\\
GIN (LCP-ORC)              & $\mathbf{72.60 \pm 1.28}$ & $\mathbf{70.64 \pm 1.32}$ & $\mathbf{87.19 \pm 1.56}$\\
GIN (19-RWPE)                 & $71.76 \pm 1.34$ & $69.35 \pm 2.24$ & $74.40 \pm 1.68$ \\
GIN (20-LAPE)                 & $71.52 \pm 1.26$ & $68.16 \pm 2.83$ & $75.84 \pm 1.65$ \\
\hline
GIN (HCP-FRC)              & $71.44 \pm 1.46$ & $70.40 \pm 1.52$ & $70.53 \pm 1.48$ \\
GIN (HCP-ORC)              & $72.18 \pm 1.37$ & $69.92 \pm 1.50$ & $84.82 \pm 1.62$ \\
GIN (EE H-19-RWPE)            & $72.08 \pm 1.40$ & $70.23 \pm 1.78$ & $77.87 \pm 1.49$ \\
GIN (EN H-19-RWPE)            & $72.32 \pm 1.42$ & $70.53 \pm 1.80$ & $77.46 \pm 1.53$ \\
GIN (Hodge H-20-LAPE)         & $72.16 \pm 1.39$ & $69.37 \pm 1.65$ & $79.94 \pm 1.81$\\
GIN (Norm. H-20-LAPE)         & $71.95 \pm 1.35$ & $69.48 \pm 1.71$ & $79.15 \pm 1.54$\\
\hline
\end{tabular}
\caption{GIN performance with selected graph-level encodings (top) and hypergraph level encodings (bottom). We report mean and standard deviation across 50 runs for the Collab, Imdb, and Reddit datasets.}
\label{tab:gin_selected}
\end{table*}
\section{Details on Empirical Expressivity Analysis}\label{appendix-detailed-encodings}
\subsection{Rook and Shrikhande}\label{appendix-rook-shrikhande}

The Rook and Shrikhande graphs are examples of strongly regular graphs with parameters srg(16,6,2,2), meaning that they have 16 nodes, all of degree $6$, and that any ajacent vertices share $2$ common neighbors, while any non-adjacent vertices also share $2$ common neighbors. We illustrate these graphs in \ref{fig:side_by_side_images_rook_shri}.

We first compute 2-RWPE. The first entry is $0$, as a random walk from node $i$ with $1$-hop does not return to node $i$. The second entry is:
$$\frac{1}{6} \sum_{j \in \mathcal{N}_i} \left( \frac{1}{6} \right) = \frac{1}{6}$$

\begin{figure}[h]
  \centering
  \includegraphics[width=0.35\textwidth]{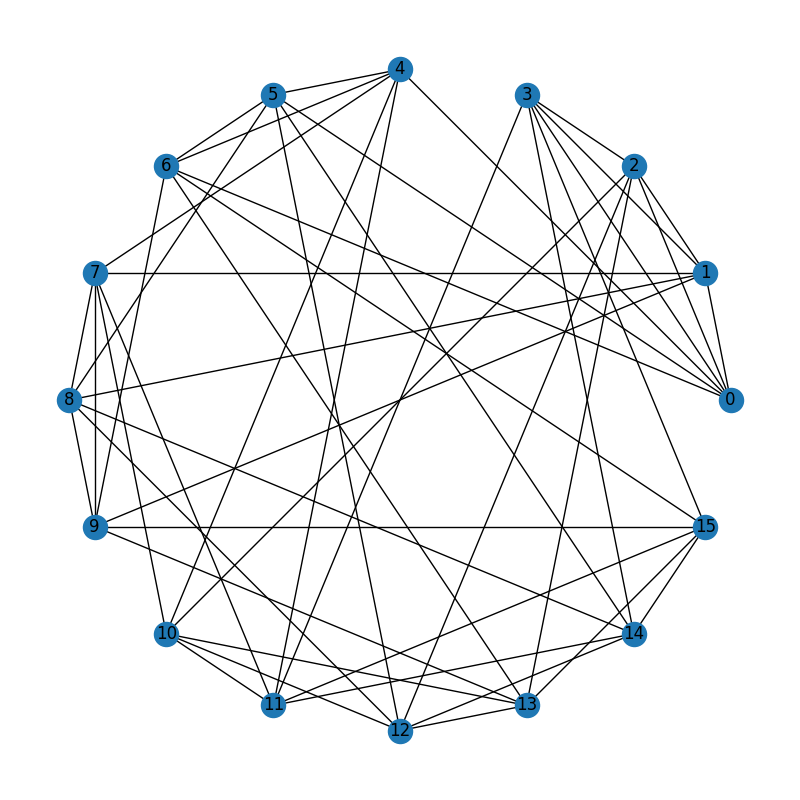}
  \includegraphics[width=0.35\textwidth]{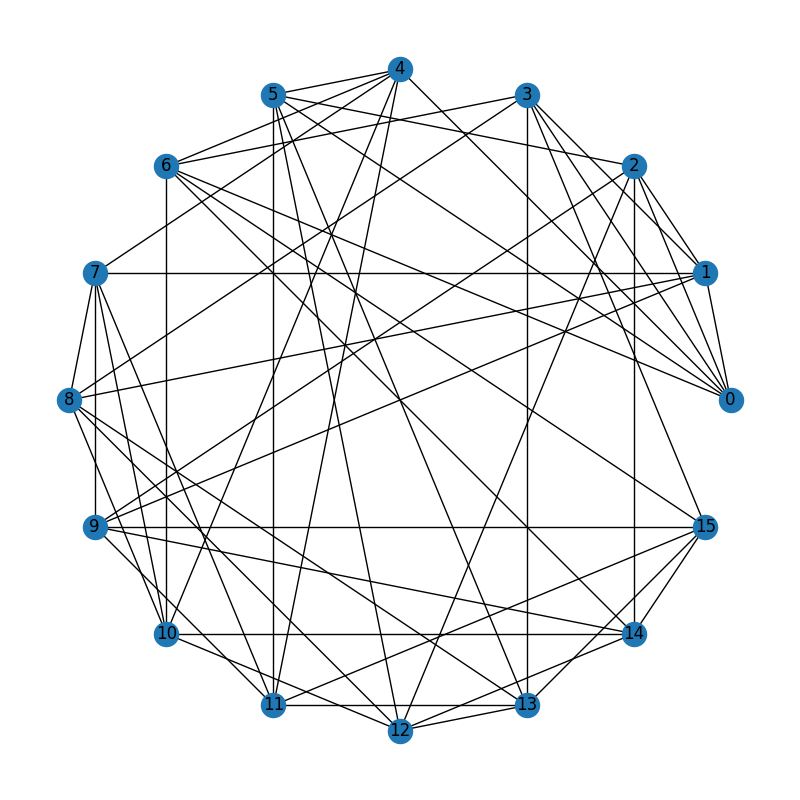}
  \caption{The Rook graph and the Shrikhande graph. Those two non-isomorphic graphs can be hard to distinguish: they are both srg(16,6,2,2) and are isospectral.}
  \label{fig:side_by_side_images_rook_shri}
\end{figure}

\begin{figure}[h]
  \centering
  \includegraphics[width=0.85\textwidth]{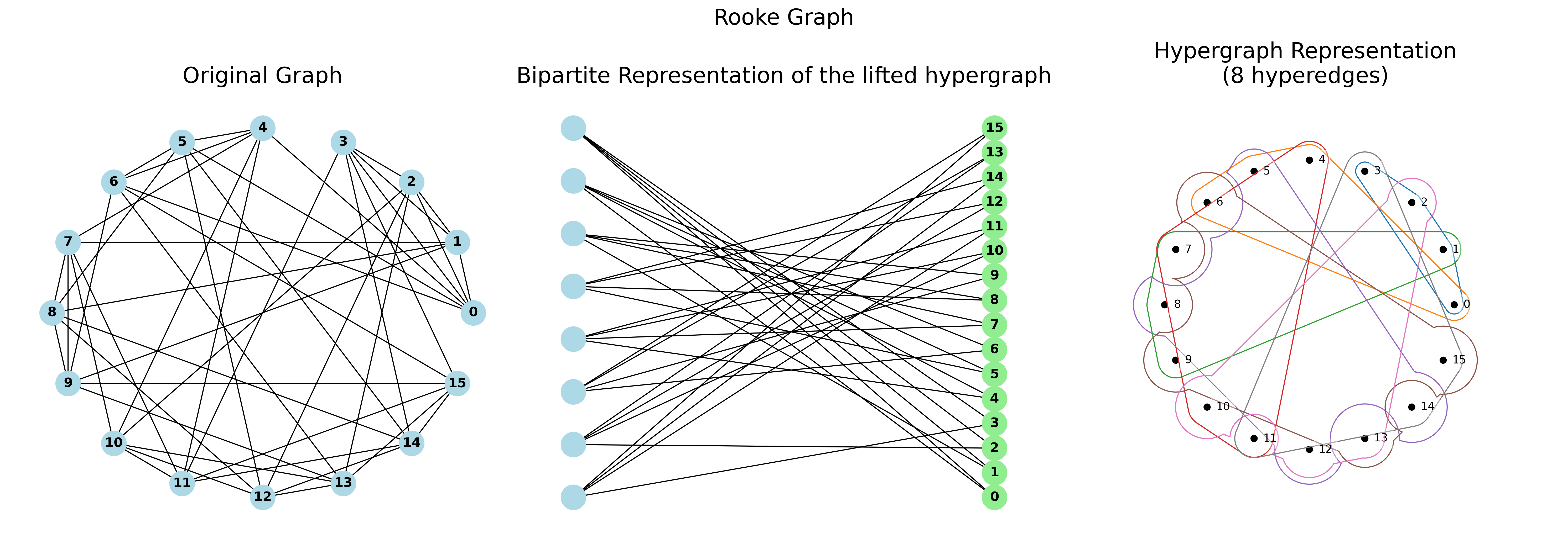}
  \caption{The Rook graph (left), its lifting (right) and its lifting's bipartite representation (center).}
  \label{fig:rook-lifting}
\end{figure}

For the Rook graph's lifting to a hypergraph (which we shall call the Rook hypergraph), the edges and vertices degree matrices are $D_e =4I_8$ and $D_v=2I_{16}$: every hyperedge has 4 nodes, and every node is in two hyperedge (see \ref{fig:rook-lifting}). For the Shrikhande graph's lifting to a hypergraph (which we call the Shrikhande hypergraph), these matrices are $D_e=3I_8$ and $D_v=6I_16$: every hyperedge has 3 nodes, and every node is in 6 hyperedge. Using Def.~\ref{def:fr}, we see that for the Rook graph, the 
FRC of any edge is $4(2-2)=0$, while for Shrikhande graph, the 
FRC of any edge is $3(2-6)=-12$.

\newpage

\subsection{Detailed comparaison of encodings}\label{appendix-detailed-comparaison}

We provide a comparison of encodings computed at the graph level and the hypergraph level in Tab.~\ref{tab:long-table-encodings}. We report the percentage of pairs in BREC that can be distinguished using the encodings, up to row permutation. The results in this table further illustrate theorems \ref{thm:lape_exp}, \ref{thm:rwpe_exp} and \ref{thm:hcp_exp}. We note that hypergraph-level encodings, with the exception of Hodge H-LAPE, are unable to distinguish pairs in the "Distance Regular" category. The "CFI" category is also notoriously difficult: Some pairs are 4-WL-indistinguishable.

\begin{table}[H]
\centering
\tiny
\begin{tabular}{|l|c|c|c|c|c|c|c|}
\hline
\textbf{Level (Encodings)} & \textbf{BASIC} & \textbf{Regular} & \textbf{str} & \textbf{Extension} & \textbf{CFI} & \textbf{4-Vertex-Condition} & \textbf{Distance Regular} \\
\hline
Graph: 1-WL/GIN & 0 & 0 & 0 & 0 & 0 & 0 & 0 \\
\hline
Graph (LDP) & 0 & 0 & 0 & 0 & 0 & 0 & 0  \\
Hyperaph (LDP) & 91.07 & 94.0  & 100 & 25.77  & 0 & 100 & 0  \\
\hline
Graph (LCP-FRC) & 0 & 0 & 0 & 0 & 0 & 0 & 0 \\
Hypergaph (HCP-FRC) & 91.07 & 96.0  & 100 & 26.8 & 0 & 100 & 0  \\
\hline
Graph (LCP-ORC) &  100 & 100 & 100 & 100 & 55.67 & 100 &0  \\
Hypergaph (HCP-ORC) & 100 & 100   & 100  & 94.85  & 100  &  100 &   0 \\
\hline
Graph (EE 2-RWPE) & 0 & 0 & 0 & 0 & 0 & 0 & 0 \\
Hypergraph (EE H-2-RWPE) & 91.07 & 82.0  & 98.0 & 50.52 & 0  & 100 & 0 \\
\hline
Graph (EE 3-RWPE) &  85.71 & 92.0 & 0  & 6.19  & 0  &  0& 0  \\
Hypergraph (EE H-3-RWPE) & 98.21  & 98.0   & 98.0  & 59.79  & 0  & 100  & 0  \\
\hline
Graph (EE 4-RWPE) & 100  & 96.0  &  0  & 83.51 & 0  & 0 &  0\\
Hypergraph (EE H-4-RWPE) & 100   & 100   & 98.0   & 92.78  & 0   & 100   & 0  \\
\hline
Graph (EE 5-RWPE) & 100  &  100 & 0   & 95.88  & 0  & 0 & 0 \\
Hypergraph (EE H-5-RWPE) & 100   & 100  &  98.0  &  95.88  & 0  & 100  & 0  \\
\hline
Graph (EE 20-RWPE)        & 100  &  100 & 0   & 100 &  3.09  & 0 & 0 \\
Hypergraph (EE H-20-RWPE) & 100   & 100  & 98   & 100   &  3.09  & 100  & 0  \\
\hline
Graph (Normalized 1-LAPE) & 0.0 & 0.0   & 0  &  0 & 0  & 0 & 0\\
Hypergraph (Normalized 1-LAPE) & 91.07 & 90.0   & 96  &  25.77 &   0 & 100 & 0 \\
\hline
Graph (Hodge 1-LAPE) & 48.21 &  100 & 100  & 71.13  &  7.22 & 100 & 5.0  \\
Hypergraph (Hodge 1-LAPE) & 98.21 & 98   & 100  &  74.23 & 7.22  & 100 & 10.0\\
\hline
\end{tabular}
\caption{Difference in encodings on the BREC dataset (390 pairs). We report the percentage of pairs with different encoding up to row permutation, at different level (graph or hypergraph). For the ORC Computations, we use the code from \citep{coupette2022ollivier} applied to hypergraphs and graphs.}\label{tab:long-table-encodings}
\end{table}

\subsection{Pair 0 of the "Basic" Category of BREC}\label{appendix-pair-0}

\begin{figure}[H]
  \centering
  \includegraphics[width=0.55\textwidth]{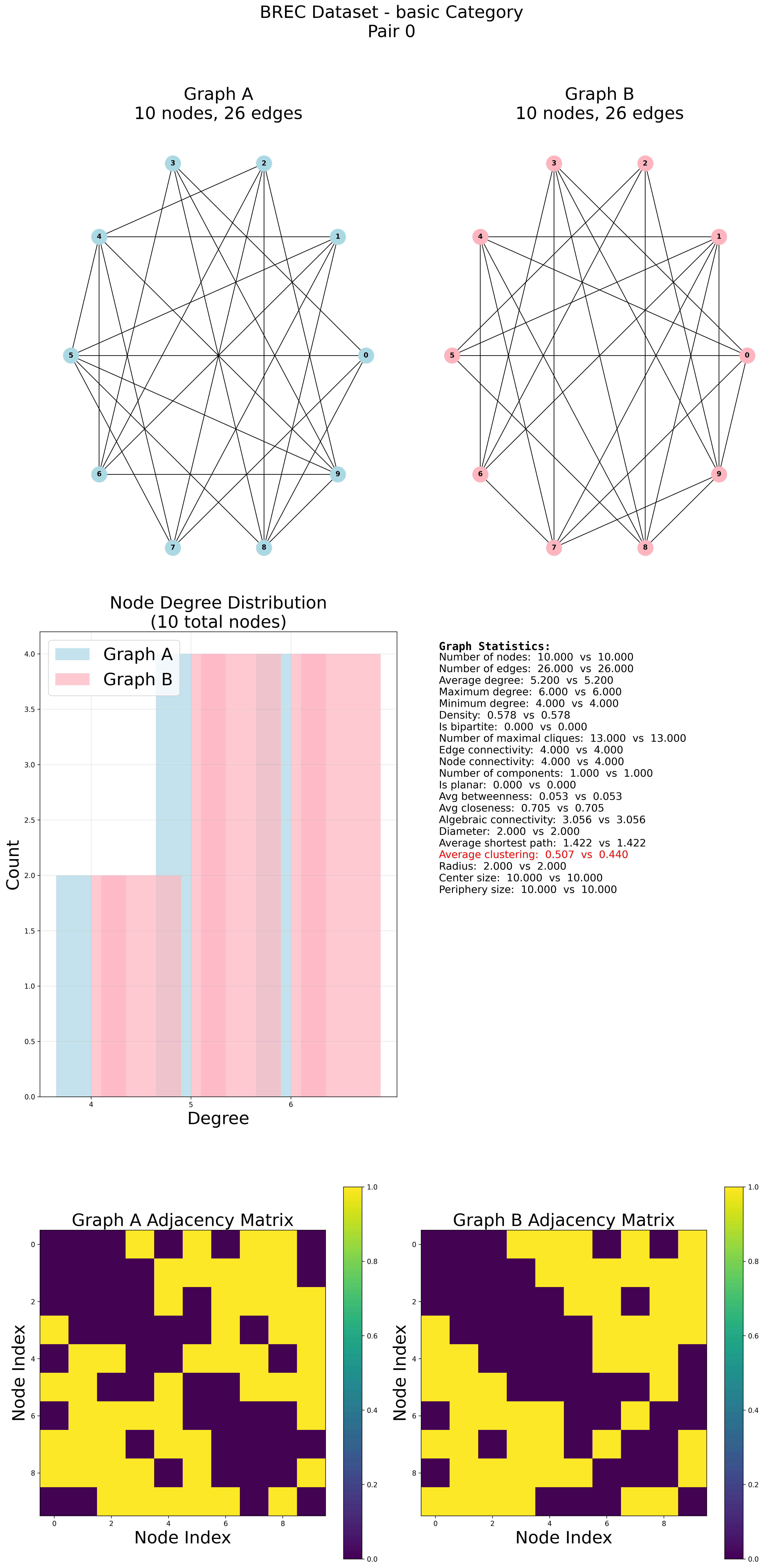}
  \caption{The pair 0 of the "Basic" category in BREC. Top: the two graphs in the pair. Second row: the (node) degree distributions and some statistics. Bottom: the adjacency matrices of the graphs.}
  \label{fig:pair-0}
\end{figure}

\begin{figure}[H]
  \centering
  \includegraphics[width=0.75\textwidth]{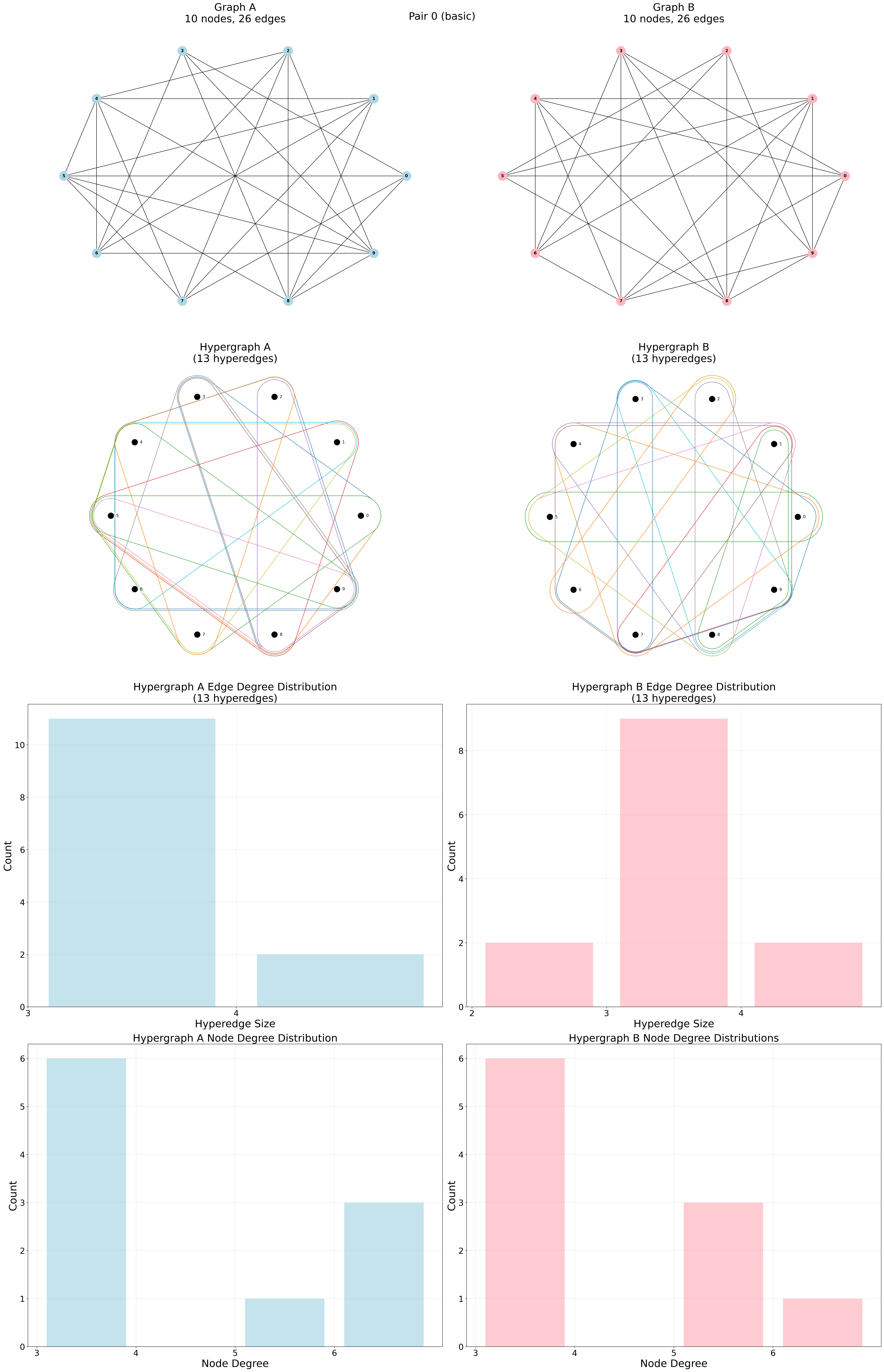}
  \caption{The pair 0 of the "Basic" category in BREC. Top: the two graphs in the pair. Second row: the graphs' liftings. Third row: the sizes of the (hyper)edges. Bottom: the node degrees.}
  \label{fig:pair-0-lifting}
\end{figure}
\pagebreak

We compute various encodings on this pair.

\subsubsection{RWPE} 

The 1st entry of the RWPE encoding is 0. We now compute one of the 2nd entries at the graph level. Start with node 0, a node of degree 4, on pair A (see \ref{fig:pair-0}). A random-walker can go to nodes of degree 4 (the node 3), 5 (the node 7) or 6 (the nodes 5 and 8). Thus, the probability of coming back to node 0 after 2 hops is $\frac{1}{4} \times \frac{1}{4}+\frac{1}{4} \times \frac{1}{5}+\frac{2}{4} \times \frac{1}{6}=0.1958\overline{3}
$.

We report the full encodings for 2-RWPE in \ref{tab:sidetables}. We can actually see they are the same (up to row permutation.

\begin{table}[h]
\footnotesize
  \centering
\begin{tabular}{|c|c|c|}
    \hline
    0.0 & $0.1958\overline{3}$ \\
    \hline
    0.0 & $0.191\overline{6}$ \\
    \hline
    0.0 & 0.20555556 \\
    \hline
    0.0 & 0.18        \\
    \hline
    0.0 & $0.19\overline{6}$ \\
    \hline
    0.0 & $0.18$        \\
    \hline
    0.0 & $0.19\overline{6}$ \\
    \hline
    0.0 & $0.1958\overline{3}$ \\
    \hline
    0.0 & $0.1\overline{8}$ \\
    \hline
    0.0 & $0.191\overline{6}$ \\
    \hline
\end{tabular}
\begin{tabular}{|c|c|c|}
    \hline
     0.0 & $0.19\overline{6}$ \\
    \hline
    0.0 & $0.19\overline{6}$ \\
    \hline
    0.0 & $0.191\overline{6}$ \\
    \hline
    0.0 & $0.191\overline{6}$        \\
    \hline
    0.0 & $0.1958\overline{3}$ \\
    \hline
    0.0 & $0.18$       \\
    \hline
    0.0 & $0.1\overline{8}$ \\
    \hline
    0.0 & $0.18$ \\
    \hline
    0.0 & $0.20\overline{5}$ \\
    \hline
    0.0 & $0.1958\overline{3}$ \\
    \hline
\end{tabular}
  \caption{Pair A (left) and Pair B (right) 2-RWPE encodings. They match if we reorder the rows of pair A as follow: 4, 6, 1, 9, 0, 3, 8, 5, 2, 7).}
  \label{tab:sidetables}
\end{table}

At the hypergraph level, H-2-RWPE are different because the maximum absolute value of the last (second) column of the encoding for hypergraph A is 0.2503052503052503 while it is 0.2935064935064935 for graph B. The full encodings can be found in \ref{tab:sidetables-hg}. It is straightforward to check that the two encodings cannot be made the same even up to scaling and row permutation.
\begin{table}[h]
\centering
\footnotesize
\begin{tabular}{|c|c|c|}
    \hline
    0.0 & 0.15842491 \\ \hline
    0.0 & 0.24619611 \\ \hline
    0.0 & 0.15620094 \\ \hline
    0.0 & 0.14429618 \\ \hline
    0.0 & 0.24619611 \\ \hline
    0.0 & 0.15842491 \\ \hline
    0.0 & 0.25030525 \\ \hline
    0.0 & 0.24175824 \\ \hline
    0.0 & 0.14429618 \\ \hline
    0.0 & 0.15620094 \\ \hline
\end{tabular}    
\begin{tabular}{|c|c|c|}
  \hline
  0.0 & 0.15165945 \\ \hline
  0.0 & 0.15818182 \\ \hline
  0.0 & 0.21682409 \\ \hline
  0.0 & 0.15909091 \\ \hline
  0.0 & 0.15909091 \\ \hline
  0.0 & 0.29350649 \\ \hline
  0.0 & 0.26695527 \\ \hline
  0.0 & 0.21682409 \\ \hline
  0.0 & 0.15165945 \\ \hline
  0.0 & 0.15818182 \\ \hline
\end{tabular}
  \caption{Pair A (left) and Pair B (right) H-2-RWPE encodings.}
  \label{tab:sidetables-hg}
\end{table}

\subsubsection{FRC}

We now turn our attention to the FRC-LCP and FRC-HCP. The FRC-LCP of both pairs is presented in \ref{tab:sidetables-g-frc}. The encoding match with the following ordering for pair A: (6, 5, 0, 1, 2, 3, 7, 4, 8, 9).

\begin{table}[H]
\centering
\footnotesize
\begin{tabular}{|c|c|c|c|c|}
    \hline
    -6.00 & -4.00 & -5.25 & -5.50 & 0.8291562 \\ \hline
    -7.00 & -6.00 & -6.60 & -7.00 & 0.48989795 \\ \hline
    -7.00 & -6.00 & -6.60 & -7.00 & 0.48989795 \\ \hline
    -6.00 & -4.00 & -5.25 & -5.50 & 0.8291562 \\ \hline
    -8.00 & -7.00 & $-7.3\overline{3}$ & -7.00 & 0.47140452 \\ \hline
    -8.00 & -6.00 & $-7.3\overline{3}$ & -7.50 & 0.74535599 \\ \hline
    -7.00 & -5.00 & -6.20 & -6.00 & 0.74833148 \\ \hline
    -7.00 & -5.00 & -6.20 & -6.00 & 0.74833148 \\ \hline
    -8.00 & -6.00 & -7.00 & -7.00 & 0.81649658 \\ \hline
    -8.00 & -6.00 & $-7.3\overline{3}$ & -7.50 & 0.74535599 \\ \hline
\end{tabular}
\begin{tabular}{|c|c|c|c|c|}
    \hline
    -7.00 & -5.00 & -6.20 & -6.00 & 0.74833148 \\ \hline
    -8.00 & -6.00 & $-7.3\overline{3}$ & -7.50 & 0.74535599 \\ \hline
    -6.00 & -4.00 & -5.25 & -5.50 &  0.8291562  \\ \hline
    -7.00 & -6.00 & -6.60 & -7.00 & 0.48989795 \\ \hline
    -7.00 & -6.00 & -6.60 & -7.00 & 0.48989795 \\ \hline
    -6.00 & -4.00 & -5.25 & -5.50 & 0.8291562 \\ \hline
    -7.00 & -5.00 & -6.20 & -6.00 & 0.74833148 \\ \hline
    -8.00 & -7.00 & $-7.3\overline{3}$ & -7.00 & 0.47140452 \\ \hline
    -8.00 & -6.00 & -7.00 & -7.00 & 0.81649658 \\ \hline
    -8.00 & -6.00 & $-7.3\overline{3}$ & -7.50 & 0.74535599 \\ \hline
\end{tabular}
\caption{Pair A (left) and Pair B (right) FRC-LCP encodings. They match with the following permuation: (6, 5, 0, 1, 2, 3, 7, 4, 8, 9)}
\label{tab:sidetables-g-frc}
\end{table}

At the hypergraph level, they are different because the max absolute value of encoding graph A is $12.0$, the max absolute value of encoding graph B is $10.0$. The full encodings are presented in \ref{tab:sidetables-hg-frc}. It is straightforward to check that the matrices cannot be scaled and row permuted to match.

\begin{table}[H]
\footnotesize
\centering
\begin{tabular}{|c|c|c|c|c|}
    \hline
    -9 & -6 & -7          & -6 & 1.41421356 \\ \hline
    -9 & -5 & $-7.\overline{6}$ & -9 & 1.88561808 \\ \hline
    -9 & -5 & $-7.\overline{6}$ & -9 & 1.88561808 \\ \hline
    -9 & -6 & -7          & -6 & 1.41421356 \\ \hline
    -11 & -5 & -7.8       & -9 & 2.4 \\ \hline
    -12 & -6 & $-9.\overline{3}$ & -9 & 1.88561808 \\ \hline
    -9 & -5 & $-6.\overline{6}$ & -6 & 1.69967317 \\ \hline
    -9 & -5 & $-6.\overline{6}$ & -6 & 1.69967317 \\ \hline
    -12 & -6 & -9         & -9 & 1.73205081 \\ \hline
    -12 & -6 & $-9.\overline{3}$ & -9 & 1.88561808 \\ \hline
\end{tabular}
\begin{tabular}{|c|c|c|c|c|c|}
\hline
    -8 & -2 & -5 & -5 & 2.44948974 \\ \hline
    -10 & -8 & -8.6 & -8 & 0.8 \\ \hline
    -8 & -2 & $-5.\overline{3}$ & -6 & 2.49443826 \\ \hline
    -8 & -5 & -7 & -8 & 1.41421356 \\ \hline
    -8 & -5 & -7 & -8 & 1.41421356 \\ \hline
    -8 & -2 & $-5.\overline{3}$ & -6 & 2.49443826 \\ \hline
    -8 & -2 & -5 & -5 & 2.44948974 \\ \hline
    -9 & -5 & -7 & -8 & 1.67332005 \\ \hline
    -10 & -6 & -8 & -8 & 1.15470054 \\ \hline
    -10 & -8 & -8.6 & -8 & 0.8 \\ \hline
\end{tabular}
\caption{Pair A (left) and Pair B (right) FRC-HCP encodings.}
\label{tab:sidetables-hg-frc}
\end{table}

\subsubsection{1-LAPE}

Using the Normalized Laplacian, the pair is 1-LAPE indistinguishable (up to row permuation, and sign flip, as the eingenvectors are defined up to $\pm1$). The 1-LAPE encodings are presented in \ref{tab:sidetables-g-lape-norm}.

\begin{table}[H]
\centering
\footnotesize
  \begin{tabular}{|c|}
    \hline
    0.30348849 \\
         \hline
    0.3441236\\
         \hline
    0.3441236\\
         \hline
   0.30348849 \\
        \hline
  0.28097574 \\
       \hline
  0.28097574 \\
       \hline
  0.30348849 \\
       \hline
  0.30348849 \\
       \hline
  0.3441236  \\
       \hline
  0.3441236 \\ 
    \hline
  \end{tabular}
  \begin{tabular}{|c|}
    \hline
  0.30348849 \\
       \hline
  0.3441236 \\
       \hline
  0.30348849 \\
       \hline
  0.3441236  \\
       \hline
  0.28097574 \\
       \hline
  0.28097574 \\
       \hline
  0.30348849 \\
       \hline
  0.30348849 \\
       \hline
  0.3441236  \\
       \hline
  0.3441236 \\
    \hline
  \end{tabular}
\caption{Pair A (left) and Pair B (right) Normalized 1-LAPE encodings. They match with the following ordering for pair A: (0, 1, 3, 2, 4, 5, 6, 7, 8, 9).}
\label{tab:sidetables-g-lape-norm}
\end{table}

At the hypergraph level, H-1-LAPE are different because the maximum absolute value is $0.408248290463863$ for pair A and $0.39223227027636787$ for pair B. The H-1-LAPE can be found in \ref{tab:sidetables-hg-lape-norm}.

\begin{table}[H]
\footnotesize
\centering
  \begin{tabular}{|c|}
    \hline
     0.25      \\
          \hline
     0.40824829\\
          \hline
     0.40824829\\
          \hline
     0.25      \\
          \hline
     0.40824829\\
          \hline
     0.40824829\\
          \hline
     0.25      \\
          \hline
     0.25      \\
          \hline
     0.20412415\\
          \hline
     0.20412415\\
    \hline
  \end{tabular}
  \begin{tabular}{|c|}
    \hline
     0.2773501 \\
     \hline
   0.39223227\\
     \hline
    0.2773501 \\
     \hline
     0.39223227\\
         \hline
     0.39223227\\
          \hline
      0.39223227\\
           \hline
       0.2773501 \\
            \hline
     0.2773501 \\
          \hline
        0.19611614\\
             \hline
      0.19611614\\
    \hline
  \end{tabular}
\caption{Pair A (left) and Pair B (right) Normalized H-1-LAPE encodings.}
\label{tab:sidetables-hg-lape-norm}
\end{table}

\subsection{Additional Plots}

\begin{figure}[H]
\footnotesize
  \centering
  \includegraphics[width=0.75\textwidth]{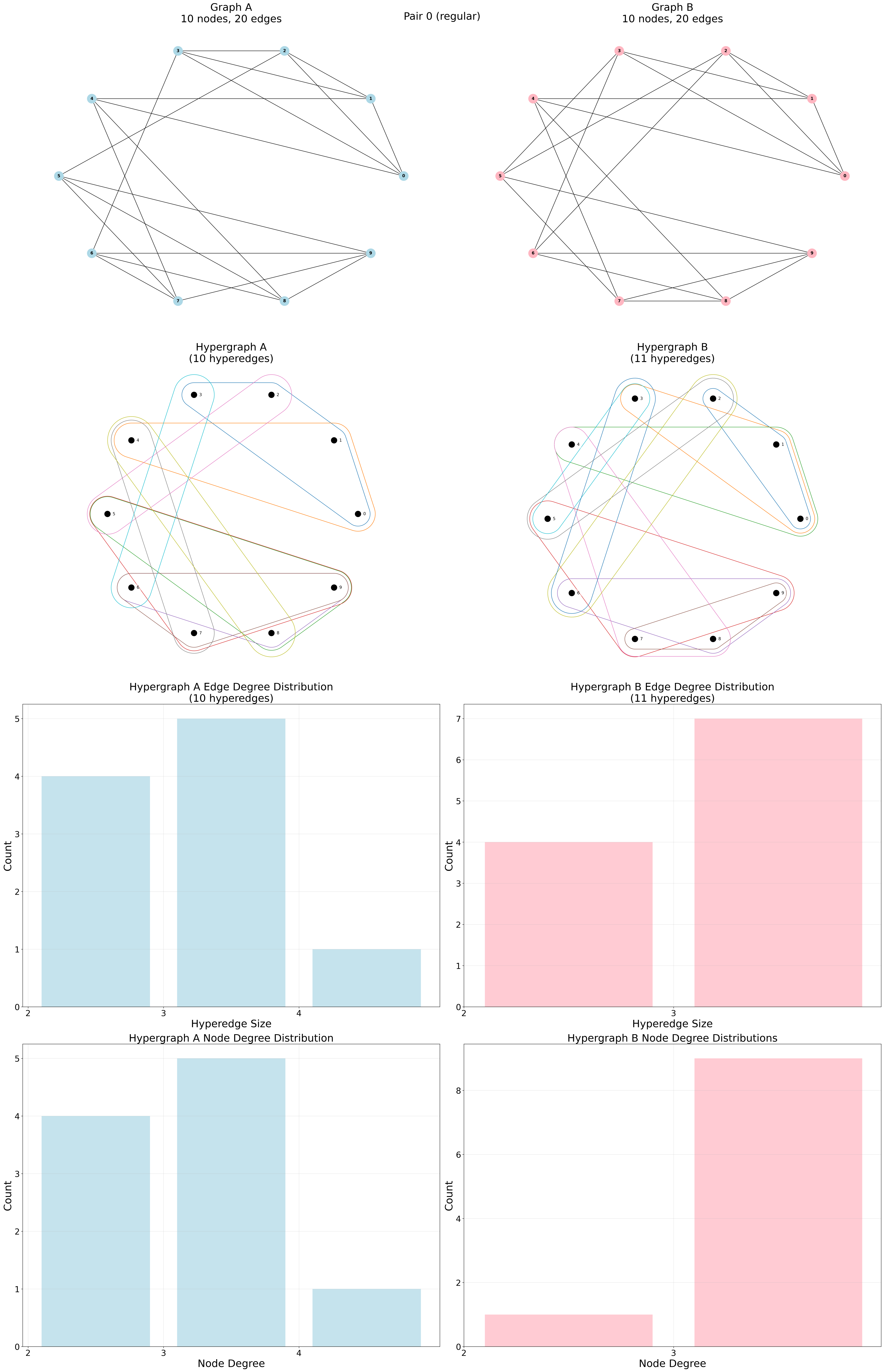}
  \caption{The pair 0 of the regular category in BREC. Top: the two graphs in the pair. Second row: the graphs' liftings. Third row: the sizes of the (hyper)edges. Bottom: the node degrees.}
  \label{fig:pair-0-lifting}
\end{figure}

\begin{figure}[H]
  \centering
  \includegraphics[width=0.75\textwidth]{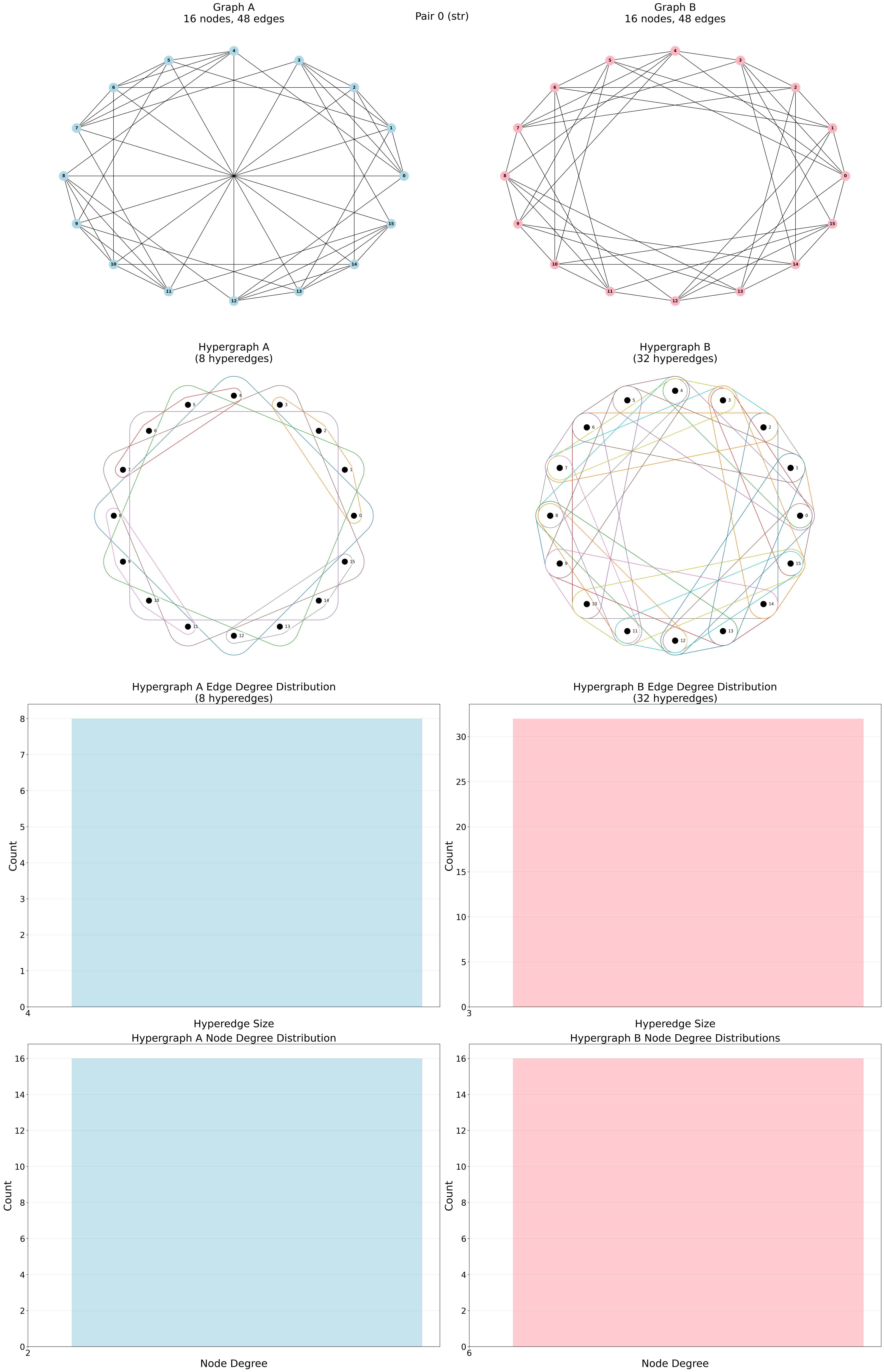}
  \caption{The pair 0 of the strongly regular category in BREC. Top: the two graphs in the pair. Second row: the graphs' liftings. Third row: the sizes of the (hyper)edges. Bottom: the node degrees.}
  \label{fig:pair-0-lifting}
\end{figure}
\section{Detailed ablation study}
\subsection{Hypegraph node level classification using HNNs}\label{hypergraph-level-prediction}
\label{appendix:ablations}

We run node-level classification tasks using HNNs \citep{huang2021unignn}. We present results using UniGAT in Tab.~\ref{tab:hg_node_classification_UniGAT}, UniGCN in Tab.~\ref{tab:hg_node_classification_UniGCN}, UniGIN in Tab.~\ref{tab:hg_node_classification_UniGIN}, UniSAGE in Tab.~\ref{tab:hg_node_classification_UniSAGE} and UniGCNII in Tab.~\ref{tab:hg_node_classification_UniGCNII}. For these experiments, we repeat experiments over 10 data splits \citep{yadati2019hypergcn} with
8 different random seeds (80 total experiments). We train for 200 epochs and report the mean and std of the testing accuracies across 80 runs. We use the Adam optimizer with a learning rate  of $0.01$ and a weight decay of $0.0005$. We use the RELU activation function. The patience is set to 200 epochs, and the dropout probability for the input layer is 0.6. The number of hidden features is set to 8, and the number of layers is 2.

\begin{table}[H]
\footnotesize
\centering
\begin{tabular}{|c|c|c|c|c|}
\hline
Model & citeseer-CC & cora-CA & cora-CC & pubmed-CC \\
\hline
UniGAT (HCP-FRC) & 61.08 ± 1.85 & 74.85 ± 1.66 & 65.95 ± 3.24 & 66.34 ± 1.79 \\
UniGAT (LDP) & 62.07 ± 1.68 & 75.47 ± 1.47 & 69.31 ± 2.23 & 68.41 ± 1.79 \\
UniGAT (Hodge H-20-LAPE) & 63.21 ± 1.53 & \textbf{75.80 ± 1.23} & 71.22 ± 1.60 & 75.77 ± 1.05 \\
UniGAT (Norm. H-20-LAPE) & 63.15 ± 1.63 & 75.65 ± 1.50 & \textbf{71.23 ± 1.87} & \textbf{75.77 ± 1.02} \\
UniGAT (H-19-RWPEE EE) & 62.97 ± 1.51 & 75.65 ± 1.40 & 70.78 ± 1.85 & 74.78 ± 1.18 \\
UniGAT (H-19-RWPEE EN) & 62.88 ± 1.53 & 75.76 ± 1.37 & 70.74 ± 1.86 & 74.75 ± 1.18 \\
UniGAT (H-19-RWPEE WE) & 62.97 ± 1.45 & 75.53 ± 1.53 & 70.86 ± 1.93 & 74.82 ± 1.12 \\
\hline
UniGAT (no encodings nlayer2) & \textbf{63.25 ± 1.48} & 75.68 ± 1.45 & 71.16 ± 1.55 & 75.62 ± 1.09 \\
\hline
\end{tabular}
\caption{Node level classification for hypergraph using hypegraph encodings for UniGAT (nlayer 2). The depth is 2.}
\label{tab:hg_node_classification_UniGAT}
\end{table}

\begin{table}[H]
\footnotesize
\centering
\begin{tabular}{|c|c|c|c|c|}
\hline
Model & citeseer-CC & cora-CA & cora-CC & pubmed-CC \\
\hline
UniGCN (HCP-FRC) & 61.20 ± 1.83 & 74.64 ± 1.45 & 68.98 ± 1.59 & 67.37 ± 1.73 \\
UniGCN (LDP) & 61.67 ± 1.90 & 75.17 ± 1.54 & 69.17 ± 1.58 & 69.33 ± 1.57 \\
UniGCN (Hodge H-20-LAPE) & \textbf{63.46 ± 1.58} & 75.64 ± 1.37 & 71.31 ± 1.19 & \textbf{75.37 ± 1.01} \\
UniGCN (Norm. H-20-LAPE) & 63.41 ± 1.61 & 75.55 ± 1.48 & 71.20 ± 1.24 & 75.30 ± 1.01 \\
UniGCN (H-19-RWPEE EE) & 63.29 ± 1.52 & 75.34 ± 1.28 & 71.13 ± 1.24 & 74.61 ± 1.18 \\
UniGCN (H-19-RWPEE EN) & 63.09 ± 1.62 & 75.30 ± 1.37 & 71.21 ± 1.34 & 74.61 ± 1.09 \\
UniGCN (H-19-RWPEE WE) & 63.04 ± 1.74 & 75.53 ± 1.43 & \textbf{71.40 ± 1.25} & 74.59 ± 1.11 \\
\hline
UniGCN (no encodings) & 63.36 ± 1.76 & \textbf{75.72 ± 1.16} & 71.10 ± 1.37 & 75.32 ± 1.09 \\
\hline
\end{tabular}
\caption{Node level classification for hypergraph using hypegraph encodings for UniGCN. The depth is 2.}
\label{tab:hg_node_classification_UniGCN}
\end{table}

\begin{table}[H]
\footnotesize
\centering
\begin{tabular}{|c|c|c|c|c|}
\hline
Model & citeseer-CC & cora-CA & cora-CC & pubmed-CC \\
\hline
UniGIN (HCP-FRC) & 59.10 ± 1.84 & 72.91 ± 1.88 & 57.77 ± 3.10 & 65.55 ± 2.40 \\
UniGIN (LDP) & 59.88 ± 2.37 & 73.83 ± 1.59 & 62.96 ± 2.89 & 67.41 ± 3.16 \\
UniGIN (Hodge H-20-LAPE) & \textbf{60.67 ± 2.23} & \textbf{74.29 ± 1.62} & 67.67 ± 2.50 & \textbf{75.11 ± 1.47} \\
UniGIN (Norm. H-20-LAPE) & 60.18 ± 2.37 & 74.12 ± 1.46 & \textbf{67.92 ± 2.23} & 75.09 ± 1.45 \\
UniGIN (H-19-RWPEE EE) & 60.33 ± 2.04 & 74.03 ± 1.51 & 67.70 ± 2.15 & 74.44 ± 1.44 \\
UniGIN (H-19-RWPEE EN) & 60.13 ± 2.31 & 74.04 ± 1.58 & 67.62 ± 2.46 & 74.34 ± 1.37 \\
UniGIN (H-19-RWPEE WE) & 60.23 ± 2.19 & 73.97 ± 1.61 & 67.50 ± 2.46 & 74.44 ± 1.32 \\
\hline
UniGIN (no encodings) & 60.56 ± 2.31 & 73.97 ± 1.56 & 67.70 ± 2.33 & 75.02 ± 1.39 \\
\hline
\end{tabular}
\caption{Node level classification for hypergraph using hypegraph encodings for UniGIN. The depth is 2.}
\label{tab:hg_node_classification_UniGIN}
\end{table}

\begin{table}[H]
\footnotesize
\centering
\begin{tabular}{|c|c|c|c|c|}
\hline
Model & citeseer-CC & cora-CA & cora-CC & pubmed-CC \\
\hline
UniSAGE (HCP-FRC) & 59.10 ± 2.29 & 72.57 ± 1.96 & 57.35 ± 3.15 & 65.71 ± 2.58 \\
UniSAGE (LDP) & 59.97 ± 2.27 & 73.88 ± 1.71 & 63.08 ± 2.68 & 67.53 ± 3.09 \\
UniSAGE (Hodge H-20-LAPE) & 60.55 ± 2.02 & 74.13 ± 1.57 & 67.80 ± 2.27 & 75.03 ± 1.42 \\
UniSAGE (Norm. H-20-LAPE) & 60.54 ± 2.19 & 74.10 ± 1.52 & \textbf{67.89 ± 2.37} & \textbf{75.07 ± 1.44} \\
UniSAGE (H-19-RWPEE EE) & 60.29 ± 2.17 & 73.99 ± 1.59 & 67.76 ± 1.91 & 74.41 ± 1.43 \\
UniSAGE (H-19-RWPEE EN) & 60.30 ± 2.33 & 74.00 ± 1.57 & 67.86 ± 2.15 & 74.29 ± 1.36 \\
UniSAGE (H-19-RWPEE WE) & 60.22 ± 2.22 & 73.97 ± 1.35 & 67.82 ± 2.18 & 74.37 ± 1.33 \\
\hline
UniSAGE (no encodings) & \textbf{60.56 ± 2.10} & \textbf{74.16 ± 1.50} & 67.80 ± 2.33 & 75.02 ± 1.44 \\
\hline
\end{tabular}
\caption{Node level classification for hypergraph using hypegraph encodings for UniSAGE. The depth is 2.}
\label{tab:hg_node_classification_UniSAGE}
\end{table}

\begin{table}[H]
\tiny
\centering
\begin{tabular}{|c|c|c|c|c|}
\hline
Model & citeseer-CC & cora-CA & cora-CC & pubmed-CC \\
\hline
UniGCNII (HCP-FRC depth 2) & 61.19 ± 1.65 & 75.50 ± 1.41 & 66.83 ± 1.88 & 65.00 ± 2.18 \\
UniGCNII (LDP depth 2) & 62.34 ± 1.62 & 76.39 ± 1.58 & 68.65 ± 1.59 & 67.40 ± 1.93 \\
UniGCNII (Hodge H-20-LAPE depth 2) & 63.90 ± 1.87 & 76.68 ± 1.44 & \textbf{71.09 ± 1.20} & \textbf{75.51 ± 1.13} \\
UniGCNII (Norm. H-20-LAPE depth 2) & 64.09 ± 1.80 & \textbf{76.79 ± 1.31} & 71.06 ± 1.28 & 75.44 ± 1.09 \\
UniGCNII (H-19-RWPEE EE depth 2) & 63.72 ± 1.55 & 76.59 ± 1.39 & 70.64 ± 1.28 & 75.05 ± 0.99 \\
UniGCNII (H-19-RWPEE EN depth 2) & 63.67 ± 1.47 & 76.56 ± 1.48 & 70.87 ± 1.31 & 75.01 ± 0.98 \\
UniGCNII (H-19-RWPEE WE depth 2) & 63.71 ± 1.53 & 76.74 ± 1.36 & 70.68 ± 1.30 & 75.03 ± 0.96 \\
\hline
UniGCNII (no encodings depth 2) & \textbf{64.13 ± 1.68} & 76.70 ± 1.43 & 70.68 ± 1.53 & 75.40 ± 1.18 \\
\hline
\hline
UniGCNII (HCP-FRC depth 8) & 62.05 ± 1.47 & 75.97 ± 1.37 & 66.45 ± 1.88 & 64.27 ± 2.66 \\
UniGCNII (LDP depth 8) & 62.90 ± 1.40 & 76.98 ± 1.28 & 69.06 ± 1.67 & 66.78 ± 2.23 \\
UniGCNII (Hodge H-20-LAPE depth 8) & \textbf{65.18 ± 1.41} & 77.06 ± 1.22 & \textbf{71.93 ± 1.15} & 75.29 ± 1.33 \\
UniGCNII (Norm. H-20-LAPE depth 8) & 65.01 ± 1.60 & 77.00 ± 1.37 & 71.91 ± 1.26 & \textbf{75.30 ± 1.35} \\
UniGCNII (H-19-RWPEE EE depth 8) & 64.77 ± 1.49 & 76.95 ± 1.31 & 71.57 ± 1.22 & 74.59 ± 1.40 \\
UniGCNII (H-19-RWPEE EN depth 8) & 64.66 ± 1.43 & 76.92 ± 1.20 & 71.79 ± 1.14 & 74.61 ± 1.35 \\
UniGCNII (H-19-RWPEE WE depth 8) & 64.75 ± 1.46 & 76.85 ± 1.23 & 71.60 ± 1.28 & 74.60 ± 1.37 \\
\hline
UniGCNII (no encodings depth 8) & 64.72 ± 1.58 & \textbf{77.17 ± 1.34} & 71.57 ± 1.32 & 75.24 ± 1.30 \\
\hline
\hline
UniGCNII (HCP-FRC depth 64) & 62.93 ± 1.45 & 75.01 ± 1.40 & 65.54 ± 1.93 & 64.44 ± 2.82 \\
UniGCNII (LDP depth 64) & 63.60 ± 1.61 & 75.89 ± 1.41 & 69.85 ± 1.65 & 66.80 ± 2.07 \\
UniGCNII (Hodge H-20-LAPE depth 64) & 65.38 ± 1.53 & \textbf{76.35 ± 1.08} & 72.52 ± 1.38 & \textbf{75.36 ± 1.29} \\
UniGCNII (Norm. H-20-LAPE depth 64) & 65.25 ± 1.60 & 76.24 ± 1.16 & \textbf{72.68 ± 1.36} & \textbf{75.36 ± 1.29} \\
UniGCNII (H-19-RWPEE EE depth 64) & \textbf{65.40 ± 1.49} & 76.25 ± 1.17 & 72.62 ± 1.24 & 74.54 ± 1.43 \\
UniGCNII (H-19-RWPEE EN depth 64) & 65.38 ± 1.56 & 76.31 ± 1.15 & 72.59 ± 1.25 & 74.63 ± 1.36 \\
UniGCNII (H-19-RWPEE WE depth 64) & 65.20 ± 1.52 & 76.16 ± 1.20 & 72.65 ± 1.27 & 74.60 ± 1.33 \\
\hline
UniGCNII (no encodings depth 64) & 65.24 ± 1.67 & 76.34 ± 1.12 & 72.64 ± 1.06 & 75.34 ± 1.24 \\
\hline
\end{tabular}
\caption{Node level classification for hypergraph using hypegraph encodings for UniGCNII (depth: 2, 8, 64).}
\label{tab:hg_node_classification_UniGCNII}
\end{table}

\newpage
\section{Hardware specifications and libraries}

\noindent All experiments in this paper were implemented in Python using PyTorch, Numpy PyTorch Geometric, and Python Optimal Transport. 

\noindent Our experiments were conducted on a local server with the above specifications.

\begin{table}[ht]\label{tab:afrc_orc_heuristic}
\footnotesize
\centering
\begin{tabular}{|l|l|}
\hline
\textbf{Components} & \textbf{Specifications} \\
\hline
Architecture & X86\_64 \\
OS & UBUNTU 20.04.5 LTS x86\_64 \\
CPU & AMD EPYC 7742 64-CORE \\
GPU & NVIDIA A100 TENSOR CORE \\
RAM & 40GB\\
\hline
\end{tabular}
\caption{}
\end{table}

\newpage


\end{document}